\theoremstyle{plain}
\newtheorem{theorem}{Theorem}[section]
\newtheorem{lemma}[theorem]{Lemma}
\newtheorem{corollary}[theorem]{Corollary}
\theoremstyle{definition}
\newtheorem{definition}[theorem]{Definition}
\theoremstyle{remark}
\newcommand{\poly}{\mathrm{poly}}
\newcommand{\tf}{\mathsf{TF}}
\newcommand{\tc}{\mathsf{TC}}
\newcommand{\ac}{\mathsf{AC}}
\newcommand{\bin}{\textbf{bin}}
\newcommand{\sbin}{\textbf{sbin}}
\newcommand{\relu}{\text{ReLU}}
\newcommand{\vt}[1]{\boldsymbol{#1}}
\newcommand{\key}{\boldsymbol{k}}
\newcommand{\query}{\boldsymbol{q}}
\newcommand{\inp}{\text{Inp}}
\newcommand{\argc}{\text{Arg}}
\newcommand{\thresh}{\text{Thresh}}
\newcommand{\type}{\text{Type}}
\title{Pause Tokens Strictly Increase the Expressivity of Constant-Depth Transformers}
\author{%
  Charles London \\
  Department of Computer Science\\
  University of Oxford\\
  Oxford, OX1 3QG \\
  \texttt{charles.london@cs.ox.ac.uk} \\
  \And
  Varun Kanade \\
  Department of Computer Science\\
  University of Oxford\\
  Oxford, OX1 3QG \\
  \texttt{varun.kanade@cs.ox.ac.uk} \\
}
\begin{document}

\maketitle

\begin{abstract}
Pause tokens, simple filler symbols such as ``...'', consistently improve Transformer performance on both language and mathematical tasks, yet their theoretical effect remains unexplained. We provide the first formal separation result, proving that adding pause tokens to constant-depth, logarithmic-width Transformers strictly increases their computational expressivity. With bounded-precision activations, Transformers without pause tokens compute only a strict subset of $\mathsf{AC}^0$ functions, while adding a polynomial number of pause tokens allows them to express the entire class. For logarithmic-precision Transformers, we show that adding pause tokens achieves expressivity equivalent to $\mathsf{TC}^0$, matching known upper bounds. Empirically, we demonstrate that two‑layer causally masked Transformers can learn parity when supplied with pause tokens, a function that they appear unable to learn without them. Our results provide a rigorous theoretical explanation for prior empirical findings, clarify how pause tokens interact with width, depth, and numeric precision, and position them as a distinct mechanism, complementary to chain-of-thought prompting, for enhancing Transformer reasoning.
\end{abstract}

\section{Introduction}
\label{intro}

Transformers \citep{vaswani2017attention} have become ubiquitous in modern machine learning, achieving state-of-the-art performance across a wide range of tasks, particularly in natural language processing. Despite their empirical success, many aspects of their computational behaviour remain poorly understood. An intriguing recently observed phenomenon is that adding pause tokens (e.g. a ``..." token) to Transformer inputs can improve performance on some question-answering and mathematical tasks \citep{pfau2024let, goyal2023think}. This phenomenon has raised questions about the computational role of such seemingly meaningless tokens, particularly in the context of chain-of-thought prompting and the faithfulness of intermediate reasoning steps.

In this paper, we provide a theoretical explanation for how pause tokens might improve Transformer performance. Specifically, we analyse how they affect the expressivity of Transformers through the lens of circuit complexity. Our contributions are as follows:
\begin{enumerate}[leftmargin=*, itemsep=0em]
    \item \textbf{Pause tokens allow Transformers to simulate Boolean circuit classes exactly.}
    We prove that constant-precision Transformers with a polynomial number of pause tokens are equivalent in expressivity to $\ac^0$ (constant depth circuits with and/or/not gates), while logarithmic precision Transformers with pause tokens match $\tc^0$ (circuits like $\ac^0$ with threshold gates).

    \item \textbf{Pause tokens strictly increase expressivity in constant-precision Transformers.} We prove a separation: Transformers without pause tokens are fundamentally less expressive than those with.

    \item \textbf{Empirically, pause tokens help causally masked Transformers learn functions requiring global computation.} We show that two-layer causally masked Transformers struggle to learn parity (which belongs to $\tc^0$) unless pause tokens are introduced. This provides some insight into how pause tokens interact with architectural constraints in practice. 
\end{enumerate}
We consider both uniform and non-uniform Transformer and circuit classes (see \cref{def:log_tf}), covering the cases where Transformer parameters and positional encodings can be entirely arbitrary, and where they must be generated by an efficiently computable procedure.

Our results provide a principled explanation for recent empirical findings on pause tokens while also highlighting broader implications for Transformer expressivity. In particular, they suggest that a Transformer's computational power is not solely determined by depth and width, but also by how computation is distributed across the token sequence. The addition of pause tokens effectively expands the available computational workspace of the model, allowing it to implement more expressive computations within the same architectural constraints. These results have implications for the faithfulness of chain-of-thought reasoning, the tradeoffs between width and depth in Transformer architectures, and the impact of quantization on Transformer computation.

The remainder of this paper is organised as follows. Section~\ref{sec:rel-work} reviews prior work on Transformer expressivity and the use of pause tokens (more related work is in the appendix). Section 3 introduces key concepts from circuit complexity and formally defines the Transformer models we study. Section 4 presents our main theoretical results, establishing the expressivity of Transformers with and without pause tokens. Section 5 provides empirical evidence that pause tokens facilitate learning of parity under causal masking. Finally, Section 6 discusses implications and Section 7 poses open questions.

\section{Related Work}
\label{sec:rel-work}

We first note the difference between the use of pause tokens and chain-of-thought (CoT, \citet{wei2022chain}) in Transformer computation. While both methods purport to increase the computational space or memory of the Transformer to enable it to solve more problems, they do so in very different ways. In CoT, a Transformer generates tokens autoregressively and attends to its own output, enabling deeper sequential computation. With pause tokens, we simply pad the input to the Transformer with filler tokens, enabling ``wider" parallel computation. While pause tokens are more efficient (as they can be computed in parallel), they are strictly less expressive than CoT \citep{merrill2024the,li2024chain}.

\noindent\textbf{Transformers with pause tokens}:
The notion of adding padding to a Transformers input to improve performance traces back to \citet{burtsev2020memory}, in which they prepended memory tokens to the input to a Transformer encoder, but achieved only marginal gains. \citet{lanham2023measuring} experimented with replacing Transformer CoT tokens with blank ``..." tokens at test time, and found they decreased performance versus CoT. The first work to empirically demonstrate an advantage in using pause tokens was \citet{goyal2023think}, in which they pre-trained a Transformer to use between 0 and 50 pause tokens before responding. This led to increased performance on natural language and mathematical benchmarks including CommonsenseQA \citep{talmor2019commonsenseqa} and GSM8K \citep{cobbe2021training}. Since then, \citet{pfau2024let} have replicated this improvement on simple mathematical tasks and proposed quantifier depth as the theoretical explanation.

\noindent\textbf{Formal models of Transformers}: 
Since the invention of the Transformer, there have been many attempts to formally characterise their computational power. Initial works determined that Transformers were Turing complete \citep{perez2018on}, but required infinite precision for the result. More recent works have used tools from theoretical computer science, including circuit complexity, communication complexity, and formal language and automata theory, to analyse more realistic models of the Transformer and derive limitations on the languages they can recognise and problems they can solve. For machine learning, language recognition can be thought of as binary classification of input strings. This section covers work using circuit complexity, but further related work can be found in \cref{app:further}.

\noindent\textbf{Circuit complexity.} \citet{hao2022formal} showed that
log-precision Transformers using unique hard attention (attending to only a
single position) are contained within $\ac^0$, while averaging hard attention
Transformers can express non-$\ac^0$ languages. \citet{merrill2023parallelism}
proved that log precision Transformers (with standard softmax attention) are
contained within logspace-uniform $\tc^0$, while \citet{chiang2025transformers}
extended these results to show that Transformers with $O(\poly(n))$ precision
are in uniform $\tc^0$. Further works have also used the circuit complexity
perspective to determine the expressive power of both constant- and
log-precision Transformers using intermediate chain-of-thought steps,
showing that they are equivalent to $\mathsf{P}$ (uniformly, or
$\mathsf{P/poly}$ non-uniformly) \citep{li2024chain,merrill2024the}. 

\section{Preliminaries}

In this section, we introduce the relevant computational complexity concepts, including circuit classes and logspace uniformity, as well as definitions of the Transformer model we are considering and fixed precision arithmetic.

\subsection{Boolean Circuits}

Boolean circuits are a model of computation that uses logical operations to process binary inputs and produce binary outputs. They are a natural way to formalise parallel computations (like Transformers), as their depth corresponds roughly to the parallel time required to compute a function, and their width corresponds to resource requirements at each level of parallel computation.

\begin{definition} (Boolean circuits).
    A Boolean circuit $C_n$ is a directed acyclic graph with $n$ sources and one sink, computing a function $\{0, 1\}^n \to \{0, 1\}$. Non-source vertices are gates labeled with logical operations. The output $C_n(x)$ is determined recursively. $|C_n|$ is the number of gates; depth is the longest path from any source to the sink.
\end{definition}



We restrict ourselves to two classes: $\mathsf{AC^0}$ and $\mathsf{TC^0}$, with constant $O(1)$ depth and at most polynomial $\mathrm{poly}(n)$ size. $\mathsf{AC^0}$ circuits use $\texttt{NOT}$, $\texttt{AND}$, and $\texttt{OR}$ gates with unbounded fan-in. $\mathsf{TC^0}$ circuits use threshold gates $f(x) = I\left[\sum_{i=1}^m x_i > \theta\right]$ and $f(x) = I\left[\sum_{i=1}^m x_i < \theta\right]$. It is known $\mathsf{AC^0} \subsetneq \mathsf{TC^0}$ \citep{furst1984parity}. $\mathsf{AC^0_\ell}$ and $\mathsf{TC^0_\ell}$ are these classes restricted to depth $\leq \ell$.

\subsection{Circuit Families and Logspace Uniformity}

\begin{definition} (Circuit families).
    Let $T: \mathbb{N} \rightarrow \mathbb{N}$ be a function. A $T(n)$-size circuit family is a sequence $\{C_n\}_{n \in \mathbb{N}}$ of Boolean circuits, where $C_n$ has $n$ inputs and a single output, and $|C_n| \leq T(n)$ for all $n$.
\end{definition}

Circuit families implicitly define a language that they accept, as follows:

\begin{definition} (Language recognition).
    A circuit family $\{C_n\}$ recognises $L \subset \{0, 1\}^*$ if, for all $x \in \{0, 1\}^*$, $C_{|x|}(x) = 1$ if and only if $x \in L$.
\end{definition}


Without further constraints, circuit families can recognise undecidable languages, such as variants of the halting problem. To avoid this, we use uniform circuits, which can be constructed by some computable function. A family $\{C_n\}$ is logspace uniform if there is a Turing machine using $O(\log n)$ space that outputs $C_n$ given $1^n$  (for a formal definition, see \cref{def:lu}). Logspace uniformity ensures that circuits can be constructed systematically, avoiding non-uniform ``hard-coded" solutions for each input size.

\subsection{Limited Precision Arithmetic}
\label{sec:math}

We follow \citet{li2024chain} in using a fixed-point numerical representation, with 1 bit for the sign, $p(n)$ bits before the point, and $p(n)$ bits afterwards. When working with constant precision $p(n) = O(1)$, and for logarithmic precision $p(n) = O(\log n)$. We will use the shorthand $p = p(n)$.

\begin{definition}[Fixed precision arithmetic]
For a fixed precision level \(p\):
\begin{itemize}[leftmargin=*,itemsep=0em]
    \item Values are represented in \(\mathbb{F}_p = \{c \cdot k \cdot 2^{-p} : c \in \{-1, 1\}, k \in \mathbb{N}, 0 \leq k \leq 2^{2p} - 1\}\).
    \item Arithmetic operations (e.g., addition, multiplication) round results to the nearest representable value in \(\mathbb{F}_p\). We define by $[x]_p : \mathbb{R} \rightarrow \mathbb{F}_p$ the operation that rounds to the nearest representable number. When addition is iterated, we operate left to right, rounding after each operation (so it is no longer associative).
    \item Saturation arithmetic is used: values exceeding the representable range \([-B_p, B_p]\), where \(B_p = 2^p - 2^{-p}\), are clamped to \(\pm B_p\).
\end{itemize}
\end{definition}

We justify these choices as a reasonable approximation of how arithmetic is done in quantized neural networks. These quantized networks often use a form of outlier clipping (saturation arithmetic) to avoid issues with overflow, and round numbers to the closest number in the representable range \citep{chen2024prefixquant}.

\subsection{Transformer Definition}

All theoretical results in this work apply to both decoder-only and encoder-only Transformers models (i.e. with and without causal masking). In our proofs on the lower bounds of Transformer expressivity, we assume causal masking as this setting is more restrictive, but results still hold if it is removed.

\begin{definition}[Transformer]
A Transformer consists of \(l\) layers, each composed of a multi-head masked self-attention mechanism and a feedforward network. Given an input sequence \(X = (x_1, \dots, x_n)\), where \(x_i \in \mathbb{R}^{d}\):

\begin{itemize}[leftmargin=*,itemsep=0em]
    \item \textbf{Self-Attention:} Each head computes a weighted sum of a linear transform of the input tokens
    
    \begin{align}
        X \leftarrow X W^V \cdot \mathrm{softmax}\left(X W^{QK} X^\top + M\right)
    \end{align}

    where $W^{QK}, W^V \in \mathbb{R}^{d \times d}$ and $M \in \mathbb{R}^{n \times n}$. In the case that the Transformer is causally masked $M$ is lower triangular, with $-B_p$ above the diagonal, and all other elements zero. Without masking, $M$ is the zero matrix. The models we consider have a fixed number of attention heads.

    \item \textbf{Feedforward Network:} A position-wise feedforward network, with constant depth and $O(d)$ width, is applied to each token after each attention layer.

    \item \textbf{Positional Encodings:} Fixed or learned positional encodings $\phi(i, n) \in \mathbb{R}^{d}$ are added or appended to each input token $x_i$ to encode token positions.

    \item \textbf{Residual Connections:} Each sublayer (attention/feedforward) has residual connections.
\end{itemize}
\end{definition}

We will define by $\tf[p(n), d(n), b(n)]$ the family of problems $\mathcal{L} : \mathcal{A}^n \rightarrow \{0, 1\}$ for which there is an integer $l$ such that there exists an $l$-layer Transformer with precision $2p(n)$ \footnote{$2p(n)$ as we have $p(n)$ bits before the fixed point and $p(n)$ bits after.}, embedding dimension $d(n)$, and $b(n)$ additional blank tokens that computes $\mathcal{L}(x)$ on any $x \in \mathcal{A}^n$. 

In this work, we will largely operate with classes of Transformers with some range of functions $p(n), d(n), b(n)$. In the constant precision case, the classes we consider are:

\begin{enumerate}
    \item $\tf[1, L, 0] \coloneqq \bigcup_{c \in \mathbb{N}} \tf[c, c \log n, 0]$,
    \item $\tf[1, L, P] \coloneqq \bigcup_{c \in \mathbb{N}} \tf[c, c\log n, n^c]$,
    \item \mbox{$\tf[1, L, Q] \coloneqq \bigcup_{c \in \mathbb{N}} \tf[c, c\log n, 2^{(\log n)^c}]$}.
\end{enumerate}

These are Transformers with constant precision, logarithmic embedding size, and no, polynomial, and quasi-polynomial pause tokens, respectively. We choose logarithmic embedding size as we feel it is the most practically relevant to current architectures, while still enabling attention over the entire input sequence. To define the uniform class $\tf[1, L, Q]$, we also have to define a notion of polylogspace-uniformity, which can be defined analogously to logspace-uniformity (see \cref{def:plu}).

In the logarithmic precision case, the classes $\tf[L, L, \cdot]$ are defined analogously, with $p(n) = c$ replaced by $p(n) = c \log n$. We denote by $\tf_l$ the family of Transformers with depth $\leq l$.

\subsection{Logspace Uniform Transformers}

Prior work on Transformer expressivity has largely focused on two extremes of uniformity. On the one hand, \citet{li2024chain} analyses non-uniform Transformers, where separate parameters can be freely defined for each input length $n$, effectively allowing different models for different input sizes. On the other hand, \citet{merrill2023parallelism} consider Transformers where the positional encoding function $\phi(i)$ is the same for all input lengths. If we want to allow the embedding size to increase for longer input sequences (necessary in the constant precision case), our embedding function must be able to depend on $n$ i.e. $\phi(i, n)$. In practice, when a model with a larger context window is needed, often the embedding size is increased (and a whole new model is trained).\footnote{For example, when moving from Llama2 to Llama3, Meta doubled the embedding dimension of the largest model and increased the context window from $4096$ to $128,000$ tokens \citep{touvron2023llama,dubey2024llama}.}

We introduce a logspace-uniform Transformer class as a middle ground between these approaches. Our construction captures a more realistic computational model while still enforcing constraints that prevent edge cases where Transformer families could encode undecidable problems through arbitrary variations in their parameters.

\begin{definition}[Logspace uniform Transformer families]
\label{def:log_tf}
A family of Transformers $\{T_n\}$ is logspace uniform if there exists Turing machines $M_1$ and $M_2$ operating with logarithmic space such that: $M_1$ takes as input $1^n$ and outputs the weights and biases of $T_n$; $M_2$ takes as input \(1^n\) and the binary encoding of an index $i$ and outputs $\phi(i, n)$.
\end{definition}

This formulation allows for naturally size-dependent embeddings and weights but prevents freely designing entirely unrelated models for different $n$. This definition also allows us to draw natural connections with uniform circuit classes, ensuring that expressivity results reflect constraints on efficiently computable models rather than arbitrary, input-length-specific designs.

\section{Expressiveness of Transformers with pause tokens}

Given the empirical improvements observed with pause tokens, a natural question is whether these tokens simply aid optimisation, or fundamentally alter the Transformer's expressivity. In this section, we present our main theoretical contributions, which establish a rigorous computational foundation for increasing Transformer expressivity with pause tokens.

\subsection{Constant precision Transformers and $\ac^0$}

We begin with the case where the Transformer operates with constant-precision arithmetic. Understanding expressivity in this regime has become more important due to the widespread use of low-bit quantized LLMs. First, we show that the class of constant-precision Transformers with a polynomial number of pause tokens is equivalent to $\ac^0$. This demonstrates that the pause tokens can act as intermediate computational units, allowing a Transformer to effectively encode the computation of a Boolean circuit within its residual stream.

\begin{figure*}[h]
    \centering
    \resizebox{0.98\textwidth}{!}{
        \usetikzlibrary{positioning, arrows.meta, fit, backgrounds, decorations.pathreplacing}

\begin{tikzpicture}[
    use as bounding box,
    cell/.style={
        draw,
        rectangle,
        minimum width=2.2cm,  
        minimum height=1.2cm,  
        align=center,
        font=\fontsize{15}{11}\selectfont
    },
    arrow/.style={
        ->,
        >=Stealth,
        thick
    },
    brace/.style={
        decorate,
        decoration={brace, amplitude=7pt},
        thick
    },
    ]

    \def\labelsList{ 
        {$v_1$},
        {$v_2$},
        {$v_3$},
        {$\dots$},
        {$v_j$},
        {$\dots$},
        {$v_n$},
        {$\substack{\text{Arg}\\(n+1,1)}$},  
        {$\substack{\text{Arg}\\(n+1,3)}$},
        {$\substack{\text{Arg}\\(n+1,j)}$},
        {$v_{n+1}$}
    }

    \def\ncells{11}

    \def\cellSpacing{2.2}  

    \def\yGate{6}
    \def\yArg{3}
    \def\yInput{0}

    \foreach \j [count=\i from 1] in \labelsList {
        \node[cell] (gate\i) at (\i * \cellSpacing, \yGate) { \j };
    }

    \def\labelsList{ 
        {$v_1$},
        {$v_2$},
        {$v_3$},
        {$\dots$},
        {$v_j$},
        {$\dots$},
        {$v_n$},
        {$\substack{\text{Arg}\\(n+1,1)}$},  
        {$\substack{\text{Arg}\\(n+1,3)}$},
        {$\substack{\text{Arg}\\(n+1,j)}$},
        {$\substack{\text{Type}\\(n+1)}$}
    }

    \foreach \j [count=\i from 1] in \labelsList {
        \node[cell] (arg\i) at (\i * \cellSpacing, \yArg) { \j };
    }

    \foreach \j [count=\i from 1] in \labelsList {
        \node[cell] (input\i) at (\i * \cellSpacing, \yInput) { \j };
    }

    \draw[arrow] (input1.north) -- +(0,1.4) -| (arg8.south);
    \draw[arrow] (input3.north) -- +(0,1.0) -| (arg9.south);
    \draw[arrow] (input5.north) -- +(0,0.6) -| (arg10.south);

    \draw[arrow] (arg8.north) -- +(0,1.4) -| (gate11.south);
    \draw[arrow] (arg9.north) -- +(0,1.0) -| (gate11.south);
    \draw[arrow] (arg10.north) -- +(0,0.6) -| (gate11.south);

    \node[anchor=east, font=\Large\bfseries] at (-0.5, \yGate) {Layer $2k + 2$};
    \node[anchor=east, font=\Large\bfseries] at (-0.5, \yArg) {Layer $2k + 1$};
    \node[anchor=east, font=\Large\bfseries] at (-0.5, \yInput) {Layer $2k$};

    \foreach \y in {\yGate, \yArg, \yInput} {
        \draw[dashed] (-0.5, \y + 0.7) -- (\ncells * \cellSpacing + 0.5, \y + 0.7);
        \draw[dashed] (-0.5, \y - 0.7) -- (\ncells * \cellSpacing + 0.5, \y - 0.7);
    }

    \draw[decorate, decoration={brace, mirror, amplitude=7pt}, thick] 
        ([xshift=0.1cm, yshift=-10pt]input1.south west) -- ([xshift=-0.1cm, yshift=-10pt]input7.south east) 
        node[midway, below=8pt, font=\Large\bfseries] {Tokens for circuit layers $\leq l$};

    \draw[decorate, decoration={brace, mirror, amplitude=7pt}, thick] 
        ([xshift=0.1cm, yshift=-10pt]input8.south west) -- ([xshift=-0.1cm, yshift=-10pt]input11.south east) 
        node[midway, below=8pt, font=\Large\bfseries] {Tokens for circuit layers $l+1$};

    \draw[red, thick, ->] (-4.5, \yInput) -- (-4.5, \yGate);

\end{tikzpicture}
    }
    \caption{Two layers of a Transformer with pause tokens can simulate a layer of a Boolean circuit. In the first layer, inputs to the gates in the layer are copied to argument positions. In the second layer, these arguments are combined at the gate position to compute the output of the gate. $v_i$ represents the value of vertex $i$ in the circuit, whether that be an input or a gate. $\argc(i, j)$ tokens denote an edge from gate $j$ to gate $i$, and $\type(i)$ tokens denote a gate $i$. The red arrow represents the direction of computation.}
    \label{fig:tf_circ}
\end{figure*}
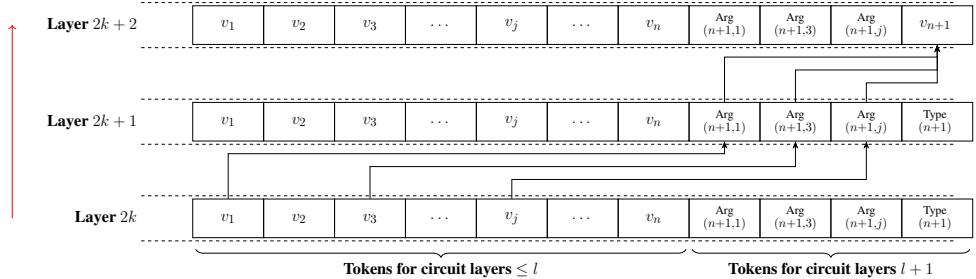

Our first result demonstrates that $\tf[1, L, P]$ is computationally equivalent to $\ac^0$ (both uniformly and non-uniformly), even when the Transformer has causal masking.

\begin{theorem}
$\tf[1, L, P] = \ac^0$.
\label{thm:ac}
\end{theorem}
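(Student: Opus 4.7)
The plan is to prove both containments separately, with the lower bound direction $\ac^0 \subseteq \tf[1, L, P]$ being the substantive construction and the upper bound $\tf[1, L, P] \subseteq \ac^0$ following from the fact that each elementary operation of a constant-precision Transformer can be implemented by a constant-depth $\ac^0$ subcircuit.

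For the lower bound, I would take a depth-$d$, size-$n^c$ $\ac^0$ circuit $C$ and simulate it by a Transformer with $n^c$ pause tokens appended to the input. The first $n$ positions hold the input bits, and the remaining positions are allocated one per gate of $C$. The (logspace-uniform) positional encoding $\phi(i,n)$ at a gate position $i > n$ encodes (i) the gate's type $\type(i)$, (ii) the layer of $C$ it lives on, and (iii) indicators $\argc(i,j)$ for each $j$ that is an input to gate $i$; at input positions it encodes the index. The simulation proceeds exactly as in \cref{fig:tf_circ}: two Transformer layers implement one circuit layer. In the first of the two, each gate position $i$ uses attention heads to copy the current values of its input positions into its own residual stream (this is classical ``hard attention'' style routing: by combining the query's $\argc(i,\cdot)$ indicators with the key's position identifier, softmax concentrates mass on the right position, and constant precision is sufficient because we only need to distinguish ``matching'' from ``non-matching''). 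In the second layer, a single attention head at position $i$ computes the $\texttt{AND}$/$\texttt{OR}$/$\texttt{NOT}$ indicated by $\type(i)$: for $\texttt{OR}$, one sets keys so that positions holding a $1$ receive a much larger score than positions holding a $0$, so the softmax output is nonzero iff some input to the gate is $1$, and the feedforward thresholds this to a clean bit; $\texttt{AND}$ is dual, and $\texttt{NOT}$ is a local gate computed in the feedforward. Iterating $d$ times uses $2d = O(1)$ Transformer layers and $n^c$ pause tokens. Finally, the designated output gate's value is routed to position $n+n^c$ (or the final token) so the Transformer's output head can read it off. A logspace Turing machine can enumerate the gates of $C$ and produce $\phi$, so the construction preserves logspace uniformity.

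For the upper bound, I would show that a single attention-plus-feedforward block with constant precision $p$, embedding dimension $O(\log n)$, and sequence length polynomial in $n$ can be simulated by a polynomial-size, constant-depth $\ac^0$ circuit. Each inner product $\langle \query_i,\key_j\rangle$ is a sum of $O(\log n)$ products of $O(1)$-bit numbers, which lies in $\ac^0$ (the sum has only $O(\log n)$ terms of $O(1)$ bits each and, under saturation arithmetic, is computed left-to-right with constant-precision rounding after every step, each such step being a $O(1)$-bit lookup). The softmax denominator sums $n$ constant-precision numbers, but because addition saturates in $\mathbb{F}_p$ with $p = O(1)$, the iterated sum is equivalent to a bounded counter that either saturates or stays small, and the resulting comparison/rounding can be expressed by table lookup on $O(\log n)$ bits per position, all in $\ac^0$. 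The position-wise feedforward is a constant-depth bounded-width network over $\mathbb{F}_p$, which is a constant-size circuit. Composing $O(1)$ layers keeps the total depth constant and size polynomial, giving an $\ac^0$ circuit; logspace uniformity is preserved since both $M_1$ and $M_2$ of \cref{def:log_tf} run in logspace.

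The main obstacle I expect is the careful handling of softmax under constant precision: because saturation arithmetic is non-associative and aggressive with only $O(1)$ bits after the point, one must verify that (i) in the lower bound, the attention pattern really does produce a value distinguishable from $0$ by a constant-precision threshold even when there are $\poly(n)$ competing positions, and (ii) in the upper bound, the left-to-right accumulation of $n$ softmax numerators through saturation does not introduce any computation beyond $\ac^0$ (in particular, that no step secretly requires counting or majority). Both points can be reduced to showing that the $p = O(1)$ regime collapses softmax into a small number of saturated states whose transitions depend only on $O(1)$ bits at a time, which is what makes $\ac^0$ sufficient rather than $\tc^0$.
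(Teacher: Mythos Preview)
Your lower-bound construction has a dimensionality gap that the paper's proof is specifically designed to avoid. You allocate one pause token per gate and say the positional encoding at gate position $i$ carries ``indicators $\argc(i,j)$ for each $j$ that is an input to gate $i$''. But an $\ac^0$ gate has unbounded fan-in, so a single gate may have $\poly(n)$ predecessors; with embedding dimension $d(n)=O(\log n)$ and precision $O(1)$ you have only $O(\log n)$ bits per token, which cannot encode an arbitrary subset of $\poly(n)$ positions. The same bottleneck kills your first layer (``copy the current values of its input positions into its own residual stream''): you cannot pack $\poly(n)$ copied bits into one $O(\log n)$-dimensional residual, and a constant number of heads does not help. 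The figure you cite in fact shows the paper's fix: allocate a separate pause token for every \emph{edge} $\argc(i,j)$ of the circuit, not just every vertex. In the paper's layer $2k{+}1$, each edge token $\argc(i,j)$ hard-attends to the single position $j$ and copies $v_j$ (an $O(\log n)$-bit address suffices for one target). In layer $2k{+}2$, the gate token $\type(i)$ attends uniformly to all of its edge tokens; saturation arithmetic guarantees the (rounded) average is nonzero iff some argument is $1$, and the feedforward thresholds this to implement \texttt{OR}, with \texttt{AND} via De Morgan. The circuit's wiring thus lives in the \emph{set} of pause tokens, not in any single token's $O(\log n)$-bit encoding---this is precisely what the pause tokens are buying you.

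A smaller issue in your upper bound: saying the inner product and softmax denominator are ``computed left-to-right with constant-precision rounding after every step, each such step being a $O(1)$-bit lookup'' yields a \emph{sequential} circuit of depth $\Theta(\log n)$ (for the inner product) or $\Theta(\poly(n))$ (for the denominator), not constant depth. The paper closes this by citing that constant-precision iterated saturation-sum is computed by a counter-free automaton, hence is in FO[$<$] and therefore in uniform $\ac^0$; and the $O(\log n)$-input, $O(\log n)$-output pieces (inner products, matrix-vector products) are handled as logspace functions on $O(\log n)$-bit inputs, placed in $\ac^0$ via \cref{lem:merrill}. Your intuition that saturation collapses the sum to ``a small number of saturated states'' is exactly the right picture, but the constant-depth implementation needs this automaton/FO argument rather than the left-to-right description.
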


To prove this equivalence, we show that (1) any function in $\ac^0$ can be computed by a Transformer in $\tf[1, L, P]$, and (2) every function computed by such a Transformer is in $\ac^0$. We sketch a proof below.

\noindent\textbf{Simulating $\ac^0$ with Transformers.}
    Let $C_n$ be a logspace-uniform $\ac^0$ circuit family computing a Boolean function $f: \{0, 1\}^n \rightarrow \{0, 1\}$. We construct a causally masked Transformer that simulates $C_n$ as follows:

    \begin{itemize}[leftmargin=*,itemsep=0em]
        \item Token representation: The Transformer receives as input a sequence of length $n$ encoding a binary string $x$. A polynomial number of pause tokens are appended, each representing an edge or vertex in $C_n$.
        \item Positional encodings: The Transformer’s positional encodings represent the structure of $C_n$, specifying the connectivity between input bits, intermediate gates, and the final output gate. Attention weights ensure that each gate token attends only to its input tokens.
        \item Computation by Transformer layers: Each pair of Transformer layers $2k+1$ and $2k+2$ mimic a single layer of the circuit $C_n$. The first attention layer copies the values of previously computed gates to the edge tokens, and the second attention layer computes the (rounded) sum of the inputs to each gate (see \cref{fig:tf_circ}). The feedforward layers are used as thresholds to compute the exact bit value of the gate output.
    \end{itemize}

 A full construction, including explicit positional encodings, weight matrices, and feedforward computations can be found in \cref{thm:ac0_tf}.

\noindent\textbf{Transformers are contained in $\ac^0$.}
Every function necessary to implement a Transformer in $\tf[1, L, P]$ can be implemented by an $\ac^0$ circuit. The positional encoding function is logspace-uniform, with $O(\log n)$ size outputs, and is thus computable in (uniform) $\ac^0$ \citep{merrill2023parallelism}. Attention mechanisms and feedforward layers involve multiplication, division and exponentiation of a constant amount of constant precision numbers, and addition of at most a polynomial amount of constant precision numbers, all of which are in (uniform) $\ac^0$.
Since a Transformer has a fixed number of layers, its overall computation is in $\ac^0$. See \cref{thm:tf_ac0} for the detailed proof.

Given this equivalence, we can move on to proving separations between Transformers with and without pause tokens. Our next result demonstrates that in the non-uniform case, a Transformer with a polynomial number of pause tokens is strictly more expressive than one without.

\begin{corollary}
    For the non-uniform class of Transformers, $\tf[1, L, 0] \subsetneq \tf[1, L, P].$
\end{corollary}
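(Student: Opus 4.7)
The plan is to combine \cref{thm:ac}, which yields $\tf[1, L, P] = \ac^0$ non-uniformly, with a concrete lower bound: no Transformer in $\tf[1, L, 0]$ computes $\text{OR}_n$ on all sufficiently large inputs. Since $\text{OR}_n$ is trivially in $\ac^0$ (a single unbounded fan-in OR gate), this directly witnesses $\tf[1, L, 0] \subsetneq \tf[1, L, P]$.

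\textbf{Key lemma (diffuse attention).} The crux is a uniform per-source attention bound. Because saturation arithmetic clamps every pre-softmax score to $[-B_p, B_p]$ with $B_p = 2^p - 2^{-p} = O(1)$, every exponentiated score lies in $[e^{-B_p}, e^{B_p}]$, and the softmax normaliser satisfies $Z \ge n e^{-B_p}$. Hence for any head, layer, target $j$, and source $k$, the attention mass $\alpha_{jk}$ is at most $e^{2B_p}/n = O(1/n)$. This is the precise sense in which a Transformer without pause tokens cannot ``focus'' on an individual position: every source is read with mass tending to $0$.

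\textbf{Indistinguishability and finish.} I would then prove, by induction on the layer index, that for any pair of inputs $x, x' \in \{0,1\}^n$ differing only at a single position $i$, the pre-rounded state at every other position $j \neq i$ shifts by at most $O(1/n)$ between the two executions. The contribution from the one altered source through $\alpha_{ji}$ is $O(1/n)$, and the shifts in the other $n-1$ softmax weights induced by the altered normaliser are each $O(1/n^2)$, summing to another $O(1/n)$. For $n$ large relative to the fixed Transformer constants $l, c, p$, this perturbation lies strictly below the grid step $2^{-p}$ of $\mathbb{F}_p$, so after rounding the state at $j$ is unchanged. Feedforward sublayers act position-wise and residuals merely add copies, so the invariant propagates cleanly. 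Specialising to $x = \vec 0$ and $x' = e_i$ with $i < n$, the state at the output position $n$ coincides in both executions, forcing equal output bits --- contradicting $\text{OR}_n(\vec 0) \neq \text{OR}_n(e_i)$.

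\textbf{Main obstacle.} The delicate step is the interplay between an $O(1/n)$ continuous perturbation and nearest-value rounding in $\mathbb{F}_p$: if some intermediate value sits within the perturbation of a midpoint of the grid, an arbitrarily small shift can flip the rounded value by one full step and cascade into later layers. Carrying the induction through therefore requires bounding the size of the set of positions whose rounded states can ``flip'' across each of the constant many layers, and showing this set remains $o(n)$ so that the aggregated contribution at position $n$ still falls below one grid step. The remaining components --- multiple heads, causal masking, residual connections, and worst-case non-uniform positional encodings --- are absorbed into the constants of the $O(1/n)$ estimate and do not alter the argument.
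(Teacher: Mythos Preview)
Your key lemma is false in the paper's arithmetic model, and in fact $\text{OR}_n \in \tf[1, L, 0]$, so your proposed separating witness does not work.

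The claim that every exponentiated score lies in $[e^{-B_p}, e^{B_p}]$ and that $Z \geq n e^{-B_p}$ ignores both rounding and saturation. In this model every operation---including $x \mapsto e^x$ and the iterated addition producing the normalizer---is followed by rounding to $\mathbb{F}_p$ and clamping to $[-B_p, B_p]$. For every $p \geq 1$ one has $B_p > (p+1)\ln 2$, hence $e^{-B_p} < 2^{-p-1}$ and $[e^{-B_p}]_p = 0$: exponentiated scores can be exactly $0$. Independently, the normalizer, being an iterated saturated sum, satisfies $Z \leq B_p$ rather than $Z \geq n e^{-B_p}$; the paper's own $\ac^0 \subseteq \tf[1,L,P]$ construction exploits precisely this, noting that ``the maximum value of the normalizing constant in the attention computation is $B_p$''. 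Consequently attention weights can be $[1/B_p]_p = 2^{-p} = \Omega(1)$, not $O(1/n)$, and the entire indistinguishability induction collapses at layer one---the difficulty is not the midpoint-flipping obstacle you flag but the premise itself.

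Concretely, a single-layer Transformer with $W^{QK}=0$ computes $\text{OR}_n$ with no pause tokens: all pre-softmax scores are $0$, all exponentiated scores are $1$, the normalizer at position $n$ saturates at $B_p$, each weight rounds to $2^{-p}$, and the attended value $\sum_j 2^{-p} x_j$ (iterated in $\mathbb{F}_p$) is $0$ iff all $x_j = 0$ and at least $2^{-p}$ otherwise; a feedforward threshold at $0$ finishes.

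The paper's argument is of a different nature: it cites \citet{li2024chain} for the fact that any fixed family in $\tf[1, L, 0]$ is simulated by non-uniform $\ac^0$ circuits of some \emph{fixed} polynomial size $O(n^k)$, and then invokes the non-uniform circuit size hierarchy to exhibit an $\ac^0$ function requiring size beyond $n^k$. The separating function is therefore a diagonal witness against a particular size bound, not a natural single-gate function like $\text{OR}$.
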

    It follows from the result of \citet{li2024chain} that $\tf[1, L, 0] \subsetneq \ac^0$. Without pause tokens, the number of available computational units in the Transformer is limited by the sequence length $n$. We can obtain an upper bound on the size of the circuits it can simulate by considering the size of the circuits that can simulate it. This will be of some fixed polynomial size $O(n^k)$ (dominated by the size of the circuit needed to add $n$ numbers). By the non-uniform size hierarchy for Boolean circuits \citep{arora2009computational}, there exist functions in non-uniform $\ac^0$ that require circuit size greater than $O(n^k)$. The class of Transformers without pause tokens is strictly less expressive than $\ac^0$, and therefore less expressive than $\tf[1, L, P]$.

In the uniform case, the separation is weaker, as there is no equivalent of the non-uniform size hierarchy. Instead, we make use of the \emph{fixed-depth} uniform size hierarchy for $\ac^0$ (see \cref{lem:depth_sep}) to prove a separation for Transformers of some fixed depth $l$.

\begin{theorem} For the uniform class of Transformers,
    $\tf_l[1, L, 0] \subsetneq \tf_l[1, L, Q]$.
\end{theorem}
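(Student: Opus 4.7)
The plan is to parallel the non-uniform separation, but replace the non-uniform circuit size hierarchy with its uniform counterpart, \cref{lem:depth_sep}. The strategy is to sandwich both Transformer classes between logspace-uniform $\ac^0$ classes parameterised jointly by depth and size, and then exhibit a separating function via the lemma.

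First, I would track depth and size in the $\tf \subseteq \ac^0$ direction of \cref{thm:ac}. Each layer of a constant-precision, logarithmic-width Transformer is simulated by a constant-depth sub-circuit, so a depth-$l$ Transformer in $\tf_l[1, L, 0]$ corresponds to a logspace-uniform $\ac^0$ circuit of depth $cl$ (with constant $c$ depending only on the architecture) and polynomial size in $n$, since only $n$ input positions and no pause tokens are available. Symbolically,
\[
\tf_l[1, L, 0] \subseteq \ac^0_{cl}[\poly(n)].
\]

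Next, I would apply the $\ac^0$-to-Transformer construction of \cref{thm:ac} illustrated in \cref{fig:tf_circ}: every depth-$d$, size-$s$ logspace-uniform $\ac^0$ circuit is computable by a Transformer of depth $2d + O(1)$ using $O(s)$ pause tokens. Taking $s = 2^{(\log n)^{O(1)}}$ gives
\[
\tf_l[1, L, Q] \supseteq \ac^0_{\lfloor (l - O(1))/2 \rfloor}\left[2^{(\log n)^{O(1)}}\right].
\]
For any sufficiently large $l$, I would then invoke \cref{lem:depth_sep} at a fixed depth $d \leq \lfloor (l - O(1))/2 \rfloor$: the fixed-depth uniform size hierarchy yields a function $f$ in logspace-uniform $\ac^0_d$ of quasi-polynomial size that is not computable by any logspace-uniform $\ac^0$ circuit of polynomial size at depth $cl$. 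This $f$ lies in $\tf_l[1, L, Q]$ by the lower bound but outside $\tf_l[1, L, 0]$ by the upper bound, yielding strict containment.

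The main obstacle is the exact form of \cref{lem:depth_sep}: a bare same-depth size hierarchy, separating only $\ac^0_d[\poly]$ from $\ac^0_d[\text{quasipoly}]$, would be insufficient because the upper bound on $\tf_l[1, L, 0]$ sits at depth $cl$ for some architecture-dependent constant $c \geq 1$. The lemma must therefore produce a separating function whose hardness persists under a constant-factor increase in depth on the polynomial-size side, which would most naturally be obtained via a padding or diagonalisation argument that propagates the size lower bound across a range of constant depths.
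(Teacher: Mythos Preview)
Your skeleton matches the paper's: sandwich $\tf_l[1,L,0]$ inside a polynomial-size depth-$l'$ uniform $\ac^0$ class, sandwich $\tf_l[1,L,Q]$ above a quasipolynomial-size depth-$\lfloor l/2\rfloor$ class via the circuit-to-Transformer construction, and find a function separating the two. You also correctly diagnose the obstacle: the lemma as stated is a \emph{same-depth} size hierarchy, while you need the lower bound to survive at depth $cl$ and the upper bound to hold at depth $\approx l/2$.

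Where your proposal falls short is in closing that gap. You gesture at ``padding or diagonalisation,'' but neither is what the paper does, and it is unclear either would produce the depth-robustness you need. The paper instead opens up the lemma and uses its witness explicitly: the separating function is parity over $m = \mathrm{polylog}(n)$ bits, and H\aa stad's bound gives size $2^{\Theta(m^{1/(d-1)})}$ at \emph{every} depth $d$. One then chooses $m$ large enough (roughly $(\log n)^{l'-1}$) so that the depth-$l'$ lower bound exceeds any fixed polynomial, while the depth-$l/2$ upper bound remains $2^{O(m^{2/(l-2)})} = 2^{(\log n)^{O(1)}}$, i.e.\ quasipolynomial. The cross-depth separation thus comes directly from the explicit size--depth tradeoff for parity, not from a black-box hierarchy theorem. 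Your plan is one step from complete: replace the hoped-for strengthened lemma with the concrete parity argument.
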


In this case, our separation is between fixed-depth Transformers with no blank tokens, and those with a \emph{quasi-polynomial} number of blank tokens (for the proof see \cref{quasi-uniform}).

It should be clear that the addition of pause tokens does not enable the Transformer class to solve problems that require more serial computation steps, but instead increases the parallel width of computation. While it seems intuitively obvious that a deeper Transformer (still of fixed depth) that can perform more sequential steps is more expressive, it is not immediately obvious what is not computable at a fixed depth with a polynomial number of pause tokens. In the interest of rigour, we prove the following lemma.

\begin{lemma}
    For any depth $l$ there exists a constant $m > l$ such that $\tf_l[1, L, P] \subsetneq \tf_{m}[1, L, P]$.
\end{lemma}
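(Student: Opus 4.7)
The plan is to leverage the equivalence $\tf[1, L, P] = \ac^0$ established in \cref{thm:ac} together with the classical depth hierarchy for $\ac^0$ (in particular the Sipser tree functions combined with the exponential lower bounds obtained via the switching lemma). The strategy is to convert Transformer depth into circuit depth via the two simulations that underlie \cref{thm:ac}, invoke the hierarchy on the circuit side, and then translate back.

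First I would carefully track the depth overhead in both directions of the equivalence. In the direction showing $\tf[1, L, P] \subseteq \ac^0$, each Transformer layer (attention, softmax, feedforward, residuals) is implemented by a subcircuit of some fixed constant depth $c_1 \geq 1$, using that all required operations on $O(\log n)$-many constant-precision values sit inside constant-depth $\ac^0$. Hence any function in $\tf_l[1, L, P]$ is computable by a polynomial-size $\ac^0$ circuit of depth at most $c_1 l$. Conversely, the construction illustrated in \cref{fig:tf_circ} simulates each circuit layer by exactly two Transformer layers, so every polynomial-size $\ac^0$ circuit of depth $d$ lies in $\tf_{2d}[1, L, P]$.

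Next, I would invoke the $\ac^0$ depth hierarchy: for every $d$ there is an explicit Boolean function $f_d$ (for instance the depth-$d$ Sipser tree function) computable by polynomial-size $\ac^0$ circuits of depth $d$ but requiring exponential size at depth $d-1$. Setting $d := c_1 l + 1$ and $f := f_d$, the upper-bound simulation rules out $f \in \tf_l[1, L, P]$ (such a Transformer would compile into a polynomial-size depth-$c_1 l$ circuit for $f$, contradicting the hierarchy), while the lower-bound simulation places $f$ in $\tf_{m}[1, L, P]$ for $m := 2d = 2c_1 l + 2 > l$, establishing the strict containment.

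The main obstacle is pinning down the hierarchy in the uniform setting: we require a logspace-uniform family of hard functions separating depth $d$ from depth $d-1$ at polynomial size, and $c_1$ must be a genuine constant independent of both $n$ and $l$. Both hold — the Sipser tree is a balanced alternating AND/OR tree whose description is trivially logspace computable, and the per-layer $\ac^0$ depth depends only on the fixed arithmetic primitives used at constant precision — but carefully verifying the uniform version of the depth hierarchy, and confirming that the standard switching-lemma lower bound transfers verbatim to this explicit uniform family, is the bulk of the work.
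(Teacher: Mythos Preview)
Your proposal is correct and takes essentially the same approach as the paper: both arguments sandwich $\tf_l[1,L,P]$ between two fixed-depth $\ac^0$ classes via the two simulations underlying \cref{thm:ac}, invoke H\aa stad's depth hierarchy for $\ac^0$ to find a separating function, and translate back to obtain the separating Transformer depth $m$. The paper simply cites H\aa stad directly rather than naming the Sipser tree, and writes the circuit-side upper bound as an unspecified constant $l'$ rather than your $c_1 l$, but the logic is identical; your explicit attention to uniformity is a nice addition that the paper's version of this particular proof leaves implicit.
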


The proof of this lemma is based on the result of \citet{hastad1986optimal}, who showed that there exist functions within $\ac^0_{l+1}$ that are not computable by any poly-size circuit in $\ac^0_l$.

\subsection{Logarithmic Precision Transformers and \(TC^0\)}

In the previous section, we established that constant precision Transformers with pause tokens are equivalent to $\ac^0$. We now extend this analysis to (uniform and non-uniform) logarithmic-precision Transformers, showing that they are equivalent to the strictly larger class $\tc^0$: constant-depth Boolean circuits with threshold gates.

\begin{theorem}
$\tf[L, L, P] = \tc^0$.
\end{theorem}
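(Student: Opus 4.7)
The plan is to establish both inclusions, mirroring the two-step template of \cref{thm:ac}. For the containment $\tf[L, L, P] \subseteq \tc^0$, I would invoke the results of \citet{merrill2023parallelism} and \citet{chiang2025transformers}, which already show that logarithmic-precision Transformers without pause tokens are simulable by (logspace-uniform) $\tc^0$ circuits. Appending $n^{O(1)}$ pause tokens merely extends the effective input length to some polynomial $n'$, and each pause token's embedding is produced by a logspace-uniform function $\phi(i, n)$ whose $O(\log n)$-bit output is itself computable in uniform $\tc^0$. Since the existing simulations are size-polynomial in the (extended) input length and the number of layers is constant, the composed circuit stays within $\tc^0$, both uniformly and non-uniformly.

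For the simulation direction $\tc^0 \subseteq \tf[L, L, P]$, I would extend the construction underlying the lower bound of \cref{thm:ac}. The token layout of \cref{fig:tf_circ} carries over verbatim: pause tokens encode edges and gates of the circuit, positional encodings encode gate type, fan-in, threshold, and connectivity, and every two Transformer layers simulate one circuit layer by first copying input values to argument positions and then aggregating at the gate position. The new ingredient is handling threshold gates $\mathbb{I}[\sum_{i=1}^m x_i > \theta]$ with fan-in $m$ up to $\poly(n)$. This is precisely where logarithmic precision provides additional power over the constant-precision $\ac^0$ construction: with $O(\log n)$ bits, values of the form $k/m$ with $k, m \leq \poly(n)$ are representable to adequate accuracy. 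To evaluate such a gate, I would set the query-key scores to be equal across the $m$ argument positions so that softmax produces uniform $1/m$ weights, and the attention output equals $(\sum_i x_i)/m$. The feedforward layer then multiplies by $m$ (whose binary representation is stored in the gate token's positional encoding), compares the result to $\theta$, and writes a single bit to the residual stream. AND, OR and NOT gates are absorbed as trivial special cases of the threshold computation.

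The main obstacle I anticipate is precision control. The softmax over $m$ positions in fixed precision produces $1/m$ only up to a rounding error that is subsequently amplified by the multiplication by $m$ used to recover the integer sum. I would address this in the style of \citet{chiang2025transformers}: with $p = O(\log n)$ bits of precision, scaling the attention logits and representation constants so that the accumulated error remains strictly below $1/2$, at which point the feedforward can deterministically recover the exact sum and correctly evaluate the threshold. A secondary obstacle is logspace uniformity of the resulting Transformer family, which reduces to producing the wiring data together with the per-gate values of $(m, \theta)$ in logspace; this follows from logspace uniformity of the underlying $\tc^0$ family together with the polynomial bound on the number of pause tokens, exactly as in the $\ac^0$ case.
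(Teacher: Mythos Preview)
Your overall structure matches the paper's: both inclusions, with the upper bound appealing to existing simulations (Merrill--Sabharwal, extended to a polynomially longer input) and the lower bound reusing the $\ac^0$ token layout with the threshold computation replacing the \texttt{AND}/\texttt{OR} logic. The containment direction $\tf[L,L,P]\subseteq\tc^0$ is essentially identical to the paper's.

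For the simulation direction $\tc^0\subseteq\tf[L,L,P]$, your plan diverges from the paper in one detail that is more than cosmetic. You propose that attention computes the average $(\sum_i x_i)/m$ and that the feedforward layer then \emph{multiplies by $m$} (stored in binary in the positional encoding) before comparing to $\theta$. But the feedforward is constrained to constant depth and $O(d)=O(\log n)$ width; multiplying two $O(\log n)$-bit quantities given as inputs is not a piecewise-linear operation and is not realisable by a constant-depth ReLU network of logarithmic width. The precision issue you flag (rounding in $1/m$ amplified by $m$) is secondary; the feasibility of the multiplication itself is the actual obstacle, and your proposed fix via scaling constants does not address it.

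The paper sidesteps this entirely: it stores $\theta/m_i$, precomputed by the logspace Turing machine that generates the positional encodings, directly in the gate token's encoding. The feedforward then simply compares the attention-computed average $\tfrac{1}{m_i}\sum_j v_j$ to this stored value via a single threshold $\mathbb{I}[\vt{u}_1-\vt{u}_3>0]$, exactly the kind of ReLU gadget already used in the $\ac^0$ construction. No multiplication in the feedforward is needed, and the precision requirement reduces to representing $1/m_i$ and $\theta/m_i$ to enough bits that the integer comparison $\sum_j v_j > \theta$ is recovered, which $O(\log n)$ precision provides. Your construction is easily repaired by adopting this change.
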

\label{tc}

This result is equivalent to that obtained in \citet{merrill2023parallelism}, that Transformers with polynomial advice contain $\tc^0$, but our definition of Transformer uniformity leads to equality in the uniform case.

The proof structure is very similar to the constant precision case, with the key difference being that logarithmic precision allows attention layers to perform threshold-like computations, as required to simulate $\tc^0$. In the reverse direction, it is necessary to show that the iterated sum of a polynomial amount of logarithmic precision numbers is in $\tc^0$. For the full proof, see \cref{sec:tf_tc0}.

While this result establishes a theoretical equivalence, we are unable to prove a strict separation between $\tf[L, L, 0]$ and $\tf[L, L, P]$ as we did in the constant precision case. Unlike $\ac^0$, where several superpolynomial bounds are well established, $\tc^0$ lower bounds are still an open area of research. The best known \emph{wire} lower bounds for $\tc^0$ are of the form $n^{1+c^{-d}}$ for some $c > 1$, and a $\tc^0$ circuit simulating a Transformer trivially requires $\Omega(n^2)$ wires due to the attention between all tokens \citep{chen2019bootstrapping, Chen2024TheoreticalLO}.

\section{Experiments: learning parity with pause tokens}

\noindent\textbf{Parity as a $\tc^0$ benchmark.}
Parity is a canonical problem within $\tc^0$, separating $\ac^0$ from $\tc^0$ by requiring threshold-like computations to count the number of set bits modulo 2 \citep{furst1984parity}. While we are unable to theoretically separate $\tf[L, L, 0]$ and $\tf[L, L, P]$, prior work has established that learning parity is difficult for Transformers without pause tokens \citep{Bhattamishra2020OnTA, hahn-2020-theoretical}. Our experiments investigate whether pause tokens enable practical learning of parity in a 2-layer Transformer.

We hypothesize that some of the observed performance increases afforded by pause tokens are due to the restrictive nature of causal masking in decoder-only Transformers. This masking limits the model's ability to aggregate information globally over the input sequence, as only the final token can attend to the entire sequence. For parity, a standard construction for a 2-layer threshold circuit involves $n$ threshold gates in the first layer, with each gate taking all $n$ input bits as input (i.e. global information, see \cref{sec:parity}). A causally masked Transformer may struggle to simulate this, while pause tokens, by providing additional ``gates'' that can attend to the entire input, offer a mechanism to bypass this bottleneck.

Our main reason for focusing on the $\tc^0$ regime is practical. To effectively simulate the $\ac^0$ regime would require that our precision $< \log n$, requiring either scaling the input to unrealistic lengths (for our compute budget) with full precision or training heavily quantized models, which we found to be unstable and non-performant.

\noindent\textbf{Experimental setup.}
We train a 2-layer, 4-head GPT-2-style Transformer \citep{radford2019language} on bit sequences of length $\in \{20,\dots,300\}$. The model is trained to compute the parity of the input sequence, where the final token's output is used for classification. To study the impact of pause tokens and causal masking, we consider the following scenarios: instant answer with causal masking; instant answer without causal masking; and $n$ pause tokens (with causal masking). Training details are in \cref{app:exp}.

\noindent\textbf{Hints.} 
We found that in all regimes, gradient-based training for a shallow Transformer struggles to learn parity using only the loss on the final prediction (never performing above chance at any length). Therefore, we follow \citet{pfau2024let} in providing additional ``hint" labels during training. 

For the non-causal and pause token Transformers, this is in the form of the $n$ threshold values $t_j = \mathbb{I}[\sum_{i=1}^n x_i \geq j]$ for all $j \in [1, n]$. This hint is analogous to operations performed by the first layer of the reference 2-layer threshold circuit for parity. When training with pause tokens, we compute the loss of the final prediction, and with respect to the hint over the pause tokens, as depicted below. This additional supervision directly encourages parallel threshold-like computations

\begin{table}[H]
\centering
\begin{tabular}{c|ccccccc}
Input & $x_1$ & \dots & $x_n$ & -1 & \dots & -1 & \texttt{END} \\
Labels  & $\square$  & \dots  & $\square$  & $t_1$  & \dots  & $t_n$  & $\texttt{PARITY}_n$ \\
Loss mask  & 0  & \dots  & 0  & 1  & \dots  & 1  & 1 \\
\end{tabular}
\end{table}

For the causal Transformer, the hint has to be serial, as it cannot compute threshold values for the entire sequence at any token but the last. We provide serial hints in the form of subparities: for token $x_j$, the label is the parity until index $j$, $t_j = \texttt{PARITY}_j$. In the instant answer regime, the loss is computed over the entire sequence as below.

\begin{table}[H]
\centering
\begin{tabular}{c|ccccccc}
Input & $x_1$ & \dots & $x_n$ & \texttt{END} \\
Labels & $t_1$  & \dots  & $t_n$  & $\texttt{PARITY}_n$. \\
\end{tabular}
\end{table}

\begin{wrapfigure}{r}{0.5\textwidth} 
  \centering
  \includegraphics[width=0.98\linewidth]{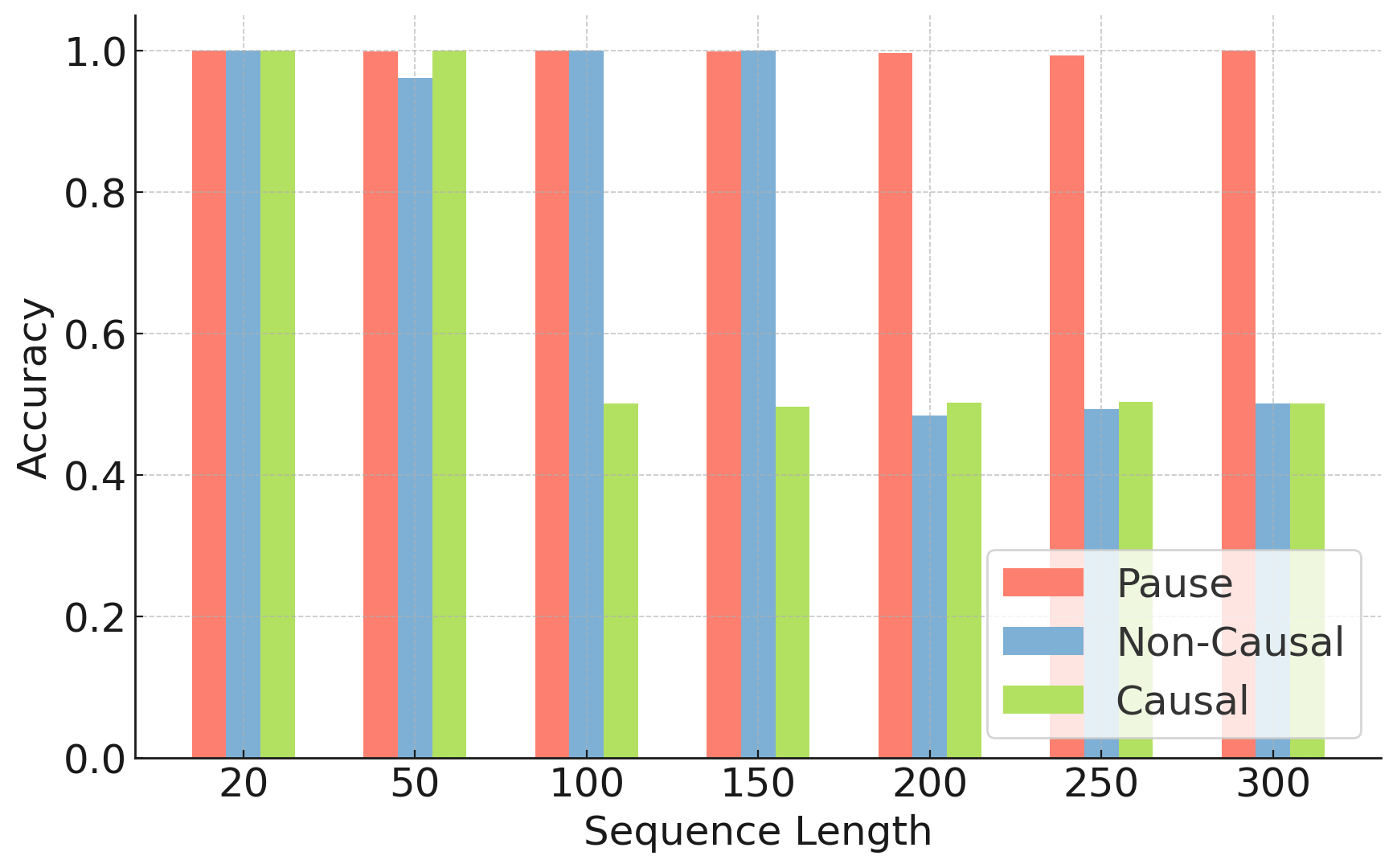}
  \caption{Test accuracy on predicting the parity of a sequence for Transformers with learned positional encodings, with and without pause tokens and causal masking. Averaged over 3 random seeds.}
  \label{fig:parity_acc_learned}
\end{wrapfigure}

\noindent\textbf{Results.}
We can see in \cref{fig:parity_acc_learned} that with causal masking and no pause tokens, the accuracy of the Transformer on parity falls to random guessing by the time the sequence is 100 bits long. The removal of the causal mask allows the Transformer to achieve near-perfect accuracy until the sequence reaches 200 bits, providing support for our hypothesis that causal masking leads to a bottleneck when the Transformer needs global information. However, pause tokens provide a non-trivial gain above and beyond removing the causal mask, suggesting that there are additional benefits over providing extra tokens for global information aggregation, even for the case of parity where the required circuit size should simply be linear in $n$. Specifically, we believe these pause tokens can act as dedicated “workspace” positions that simplify how the Transformer routes and aggregates information across the sequence. Having separate tokens for intermediate or partial computations can reduce the complexity of attention patterns in a way that is not possible without them.
%


Our empirical results suggest that, in causally masked Transformers, if a task requires global information, pause tokens can allow it to learn the task more easily, provided an appropriate signal is provided during training. This could help explain improvements on tasks like question-answering and mathematical reasoning, where the entire input is relevant to the final answer.

\section{Discussion}

\noindent\textbf{The role of pause tokens.}
Our findings suggest that pause tokens can enhance the computational expressivity of Transformers, particularly in the constant precision setting. In constant precision settings, pause tokens allow Transformers to express all of $\ac^0$, and in logarithmic precision settings, this increases to $\tc^0$. This provides a theoretical underpinning to the empirical results of \citet{goyal2023think} and \citet{pfau2024let}.

The limited additional expressivity (requiring a polynomial number of tokens for theoretical gains) and difficulty in training the models to utilize the capacity effectively also provides an explanation for the lack of performance gains observed by \citet{burtsev2020memory} and \citet{lanham2023measuring}. Pause tokens can only help in the case that the task requires additional computational ``width'' rather than depth, may require orders of magnitude more pause tokens than input tokens, and empirically require specific training strategies to be useful.

\noindent\textbf{Chain-of-thought faithfulness.}
A natural concern is whether pause tokens introduce an additional layer of obfuscation in Transformer reasoning. Some prior work has suggested that chain-of-thought (CoT) reasoning can be misleading, in the sense that models may generate intermediate reasoning steps that are not causally linked to the final prediction \citep{turpin2024language}. Given that our results establish a theoretical advantage of pause tokens in expressivity, it raises the question of whether a model could use such tokens to obfuscate computations.

Our findings suggest that the risk of pause tokens being used for deceptive reasoning is limited. Unlike CoT, pause tokens are only useful for a more restricted class of problems (under the conjecture that $\tc^0 \subsetneq P$), and training data for reasoning tends to be in the form of sequential computation steps, rather than parallelizable as would be necessary to utilize pause tokens. It seems likely that the default learned behaviour of a Transformer, especially one trained via next-token prediction, will be to generate tokens that are meaningful to the problem it is trying to solve. Even in the case that a model does learn to make use of intermediate filler tokens, \citet{bharadwaj2024understanding} demonstrates that non-obfuscated tokens can be recovered from the intermediate layers.

\noindent\textbf{Quantization.} Our results offer insights into how quantization affects Transformer expressivity. Specifically, we show that if both weights and activations are quantized to constant precision, the model is limited to $\ac^0$ expressivity. However, if only weights are quantized while activations retain logarithmic precision, the Transformer remains in $\tc^0$, with the caveat that we need exact thresholding in the feedforward layers.

This follows from our proof that $\tc^0 \subseteq \tf[L, L, P]$ (\cref{thm:uniform-TF}), which does not rely on weights having logarithmic precision except to perform thresholding in the feedforward network. The self-attention mechanism does not impose an expressivity constraint with constant precision weights (and saturation arithmetic), as long as the embedding dimension and activation precision remain log-size. This suggests that in mixed precision Transformer models, the expressivity bottleneck may be in the non-linearity of the feedforward network. If threshold-like behaviour can be implemented at sufficient precision, the model retains $\tc^0$ expressivity even with weight quantization.

\section{Future directions and limitations}

Our theoretical results rely on specific choices in modeling Transformer computation, particularly regarding numerical precision and arithmetic operations. Here, we outline future directions for refining these results and addressing some of the limitations.

\noindent\textbf{Saturation arithmetic and circuit complexity assumptions.} Our proof that $\tf[1, L, P] = \ac^0$ relies on the use of saturation arithmetic to allow the Transformer to attend to a polynomial number of locations when simulating $\ac^0$ gates with polynomial fan-in. While this assumption is a reasonable approximation to practical implementations of quantized neural networks, changing the arithmetic definition can change the expressivity of the models. It is possible to replace saturation arithmetic with modular arithmetic and still demonstrate that $\ac^0 \subseteq \tf[1, L, P]$, but this breaks the equivalence as modular addition is not in $\ac^0$ (in this case the Transformer belongs to $\mathsf{ACC}^0$). Further work could analyse the expressivity under different numerical assumptions.

\noindent\textbf{Tighter bounds on the expressivity of $\tf_l$.} For the non-uniform setting, we are able to prove a strict separation between $\tf[1, L, 0]$ and $\tf[1, L, P]$. However, in the uniform fixed-depth setting, our separation requires a quasi-polynomial number of pause tokens. Stricter upper and lower bounds on the expressivity of fixed-depth Transformers could lead to stronger separations between uniform Transformers with and without pause tokens. In the logarithmic precision case, a tighter bound might enable the use of the depth 2 $\tc^0$ super-linear gate and super-quadratic wire lower bounds of \citet{kane2016super} to separate very shallow Transformers with and without pause tokens.

\noindent\textbf{Separating $\tf[L, L, 0]$ and $\tf[L, L, P]$.} While we prove that $\tf[L, L, P] = \tc^0$, we are unable to show that $\tf[L, L, 0] \subsetneq \tf[L, L, P]$. This is primarily due to the lack of known superpolynomial lower bounds for $\tc^0$ circuits, a long-standing open problem in circuit complexity. If such bounds were established, they could lead to a provable separation between Transformers with and without pause tokens in the logarithmic-precision regime. Alternatively, perhaps approaching the problem from the Transformer side could provide some intuition for improving $\tc^0$ lower bounds.  Empirically, our results suggest that pause tokens can significantly improve the ability of causal Transformers to learn global functions, but we leave formal separation results as an open problem.

\noindent\textbf{Expressivity vs. learnability.} Our work focuses entirely on what Transformers with pause tokens can compute, but provides no theoretical results on whether they can learn to use this additional expressivity effectively. Empirically, we (and \citet{pfau2024let}) observe that pause tokens can improve performance, but only when parallelizable explicit supervision is provided. An avenue for further research is to theoretically analyse the learnability of Transformers with pause tokens, to determine whether gradient-based training can access the full expressivity benefits.

\bibliographystyle{plainnat}
\bibliography{neurips}

\begin{thebibliography}{39}
\providecommand{\natexlab}[1]{#1}
\providecommand{\url}[1]{\texttt{#1}}
\expandafter\ifx\csname urlstyle\endcsname\relax
  \providecommand{\doi}[1]{doi: #1}\else
  \providecommand{\doi}{doi: \begingroup \urlstyle{rm}\Url}\fi

\bibitem[Arora and Barak(2009)]{arora2009computational}
Sanjeev Arora and Boaz Barak.
\newblock \emph{Computational complexity: a modern approach}.
\newblock Cambridge University Press, 2009.

\bibitem[Bharadwaj(2024)]{bharadwaj2024understanding}
Aryasomayajula~Ram Bharadwaj.
\newblock Understanding hidden computations in chain-of-thought reasoning, 2024.
\newblock URL \url{https://arxiv.org/abs/2412.04537}.

\bibitem[Bhattamishra et~al.(2020)Bhattamishra, Ahuja, and Goyal]{Bhattamishra2020OnTA}
S.~Bhattamishra, Kabir Ahuja, and Navin Goyal.
\newblock On the ability and limitations of transformers to recognize formal languages.
\newblock In \emph{Conference on Empirical Methods in Natural Language Processing}, 2020.
\newblock URL \url{https://api.semanticscholar.org/CorpusID:222225236}.

\bibitem[Bhattamishra et~al.(2024)Bhattamishra, Hahn, Blunsom, and Kanade]{bhattamishra2024separations}
Satwik Bhattamishra, Michael Hahn, Phil Blunsom, and Varun Kanade.
\newblock Separations in the representational capabilities of transformers and recurrent architectures.
\newblock In \emph{The Thirty-eighth Annual Conference on Neural Information Processing Systems}, 2024.
\newblock URL \url{https://openreview.net/forum?id=6HUJoD3wTj}.

\bibitem[Burtsev et~al.(2020)Burtsev, Kuratov, Peganov, and Sapunov]{burtsev2020memory}
Mikhail~S Burtsev, Yuri Kuratov, Anton Peganov, and Grigory~V Sapunov.
\newblock Memory transformer.
\newblock \emph{arXiv preprint arXiv:2006.11527}, 2020.

\bibitem[Chen and Tell(2019)]{chen2019bootstrapping}
Lijie Chen and Roei Tell.
\newblock Bootstrapping results for threshold circuits “just beyond” known lower bounds.
\newblock In \emph{Proceedings of the 51st Annual ACM SIGACT Symposium on Theory of Computing}, STOC 2019, page 34–41, New York, NY, USA, 2019. Association for Computing Machinery.
\newblock ISBN 9781450367059.
\newblock \doi{10.1145/3313276.3316333}.
\newblock URL \url{https://doi.org/10.1145/3313276.3316333}.

\bibitem[Chen et~al.(2024{\natexlab{a}})Chen, Peng, and Wu]{Chen2024TheoreticalLO}
Lijie Chen, Binghui Peng, and Hongxun Wu.
\newblock Theoretical limitations of multi-layer transformer.
\newblock \emph{ArXiv}, abs/2412.02975, 2024{\natexlab{a}}.
\newblock URL \url{https://api.semanticscholar.org/CorpusID:274464787}.

\bibitem[Chen et~al.(2024{\natexlab{b}})Chen, Liu, Wang, Bin, Shao, and Luo]{chen2024prefixquant}
Mengzhao Chen, Yi~Liu, Jiahao Wang, Yi~Bin, Wenqi Shao, and Ping Luo.
\newblock Prefixquant: Static quantization beats dynamic through prefixed outliers in llms.
\newblock \emph{arXiv preprint arXiv:2410.05265}, 2024{\natexlab{b}}.

\bibitem[Chiang(2025)]{chiang2025transformers}
David Chiang.
\newblock Transformers in uniform $\mathsf{TC}^0$.
\newblock \emph{Transactions on Machine Learning Research}, 2025.

\bibitem[Cobbe et~al.(2021)Cobbe, Kosaraju, Bavarian, Chen, Jun, Kaiser, Plappert, Tworek, Hilton, Nakano, et~al.]{cobbe2021training}
Karl Cobbe, Vineet Kosaraju, Mohammad Bavarian, Mark Chen, Heewoo Jun, Lukasz Kaiser, Matthias Plappert, Jerry Tworek, Jacob Hilton, Reiichiro Nakano, et~al.
\newblock Training verifiers to solve math word problems.
\newblock \emph{arXiv preprint arXiv:2110.14168}, 2021.

\bibitem[Del\'etang et~al.(2022)Del\'etang, Ruoss, Grau-Moya, Genewein, Wenliang, Catt, Hutter, Legg, and Ortega]{Deletang2022NeuralNA}
Gr\'egoire Del\'etang, Anian Ruoss, Jordi Grau-Moya, Tim Genewein, Li~Kevin Wenliang, Elliot Catt, Marcus Hutter, Shane Legg, and Pedro~A. Ortega.
\newblock Neural networks and the chomsky hierarchy.
\newblock \emph{ArXiv}, abs/2207.02098, 2022.
\newblock URL \url{https://api.semanticscholar.org/CorpusID:250280065}.

\bibitem[Dubey et~al.(2024)Dubey, Jauhri, Pandey, Kadian, Al-Dahle, Letman, Mathur, Schelten, Yang, Fan, et~al.]{dubey2024llama}
Abhimanyu Dubey, Abhinav Jauhri, Abhinav Pandey, Abhishek Kadian, Ahmad Al-Dahle, Aiesha Letman, Akhil Mathur, Alan Schelten, Amy Yang, Angela Fan, et~al.
\newblock The llama 3 herd of models.
\newblock \emph{arXiv preprint arXiv:2407.21783}, 2024.

\bibitem[Furst et~al.(1984)Furst, Saxe, and Sipser]{furst1984parity}
Merrick Furst, James~B Saxe, and Michael Sipser.
\newblock Parity, circuits, and the polynomial-time hierarchy.
\newblock \emph{Mathematical systems theory}, 17\penalty0 (1):\penalty0 13--27, 1984.

\bibitem[Goyal et~al.(2024)Goyal, Ji, Rawat, Menon, Kumar, and Nagarajan]{goyal2023think}
Sachin Goyal, Ziwei Ji, Ankit~Singh Rawat, Aditya~Krishna Menon, Sanjiv Kumar, and Vaishnavh Nagarajan.
\newblock Think before you speak: Training language models with pause tokens.
\newblock In \emph{ICLR}, 2024.
\newblock URL \url{https://openreview.net/forum?id=ph04CRkPdC}.

\bibitem[Hahn(2020)]{hahn-2020-theoretical}
Michael Hahn.
\newblock Theoretical limitations of self-attention in neural sequence models.
\newblock \emph{Transactions of the Association for Computational Linguistics}, 8:\penalty0 156--171, 2020.
\newblock \doi{10.1162/tacl_a_00306}.
\newblock URL \url{https://aclanthology.org/2020.tacl-1.11/}.

\bibitem[Hao et~al.(2022)Hao, Angluin, and Frank]{hao2022formal}
Yiding Hao, Dana Angluin, and Robert Frank.
\newblock Formal language recognition by hard attention transformers: Perspectives from circuit complexity.
\newblock \emph{Transactions of the Association for Computational Linguistics}, 10:\penalty0 800--810, 2022.

\bibitem[Hastad(1986)]{hastad1986optimal}
J~Hastad.
\newblock Almost optimal lower bounds for small depth circuits.
\newblock In \emph{Proceedings of the Eighteenth Annual ACM Symposium on Theory of Computing}, STOC '86, page 6–20, New York, NY, USA, 1986. Association for Computing Machinery.
\newblock ISBN 0897911938.
\newblock \doi{10.1145/12130.12132}.
\newblock URL \url{https://doi.org/10.1145/12130.12132}.

\bibitem[Hesse(2001)]{hesse2001division}
William Hesse.
\newblock Division is in uniform $\mathsf{TC}^0$.
\newblock In \emph{International Colloquium on Automata, Languages, and Programming}, pages 104--114. Springer, 2001.

\bibitem[Immerman(1998)]{immerman1998descriptive}
Neil Immerman.
\newblock \emph{Descriptive Complexity}.
\newblock Springer Verlag, 1998.

\bibitem[Kane and Williams(2016)]{kane2016super}
Daniel~M Kane and Ryan Williams.
\newblock Super-linear gate and super-quadratic wire lower bounds for depth-two and depth-three threshold circuits.
\newblock In \emph{Proceedings of the forty-eighth annual ACM symposium on Theory of Computing}, pages 633--643, 2016.

\bibitem[Lanham et~al.(2023)Lanham, Chen, Radhakrishnan, Steiner, Denison, Hernandez, Li, Durmus, Hubinger, Kernion, et~al.]{lanham2023measuring}
Tamera Lanham, Anna Chen, Ansh Radhakrishnan, Benoit Steiner, Carson Denison, Danny Hernandez, Dustin Li, Esin Durmus, Evan Hubinger, Jackson Kernion, et~al.
\newblock Measuring faithfulness in chain-of-thought reasoning.
\newblock \emph{arXiv preprint arXiv:2307.13702}, 2023.

\bibitem[Li et~al.(2024)Li, Liu, Zhou, and Ma]{li2024chain}
Zhiyuan Li, Hong Liu, Denny Zhou, and Tengyu Ma.
\newblock Chain of thought empowers transformers to solve inherently serial problems.
\newblock In \emph{The Twelfth International Conference on Learning Representations}, 2024.
\newblock URL \url{https://openreview.net/forum?id=3EWTEy9MTM}.

\bibitem[Limaye et~al.(2019)Limaye, Sreenivasaiah, Srinivasan, Tripathi, and Venkitesh]{limaye2019fixed}
Nutan Limaye, Karteek Sreenivasaiah, Srikanth Srinivasan, Utkarsh Tripathi, and S.~Venkitesh.
\newblock A fixed-depth size-hierarchy theorem for $\mathsf{AC}^0[\oplus]$ via the coin problem, 2019.
\newblock URL \url{https://arxiv.org/abs/1809.04092}.

\bibitem[Liu et~al.(2023)Liu, Ash, Goel, Krishnamurthy, and Zhang]{liu2023transformers}
Bingbin Liu, Jordan~T. Ash, Surbhi Goel, Akshay Krishnamurthy, and Cyril Zhang.
\newblock Transformers learn shortcuts to automata.
\newblock In \emph{The Eleventh International Conference on Learning Representations}, 2023.
\newblock URL \url{https://openreview.net/forum?id=De4FYqjFueZ}.

\bibitem[McNaughton and Papert(1971)]{mcnaughton1971counter}
Robert McNaughton and Seymour~A. Papert.
\newblock \emph{Counter-Free Automata (M.I.T. research monograph no. 65)}.
\newblock The MIT Press, 1971.
\newblock ISBN 0262130769.

\bibitem[Merrill(2023)]{Merrill2023FormalLA}
William Merrill.
\newblock Formal languages and neural models for learning on sequences.
\newblock In \emph{International Conference on Graphics and Interaction}, 2023.
\newblock URL \url{https://api.semanticscholar.org/CorpusID:261101973}.

\bibitem[Merrill and Sabharwal(2023)]{merrill2023parallelism}
William Merrill and Ashish Sabharwal.
\newblock The parallelism tradeoff: Limitations of log-precision transformers.
\newblock \emph{Transactions of the Association for Computational Linguistics}, 11:\penalty0 531--545, 2023.

\bibitem[Merrill and Sabharwal(2024)]{merrill2024the}
William Merrill and Ashish Sabharwal.
\newblock The expressive power of transformers with chain of thought.
\newblock In \emph{The Twelfth International Conference on Learning Representations}, 2024.
\newblock URL \url{https://openreview.net/forum?id=NjNGlPh8Wh}.

\bibitem[Peng et~al.(2024)Peng, Narayanan, and Papadimitriou]{Peng2024OnLO}
Binghui Peng, Srini Narayanan, and Christos Papadimitriou.
\newblock On limitations of the transformer architecture.
\newblock \emph{ArXiv}, abs/2402.08164, 2024.
\newblock URL \url{https://api.semanticscholar.org/CorpusID:267636545}.

\bibitem[Pfau et~al.(2024)Pfau, Merrill, and Bowman]{pfau2024let}
Jacob Pfau, William Merrill, and Samuel~R. Bowman.
\newblock Let{\textquoteright}s think dot by dot: Hidden computation in transformer language models.
\newblock In \emph{First Conference on Language Modeling}, 2024.
\newblock URL \url{https://openreview.net/forum?id=NikbrdtYvG}.

\bibitem[Pérez et~al.(2019)Pérez, Marinković, and Barceló]{perez2018on}
Jorge Pérez, Javier Marinković, and Pablo Barceló.
\newblock On the turing completeness of modern neural network architectures.
\newblock In \emph{International Conference on Learning Representations}, 2019.
\newblock URL \url{https://openreview.net/forum?id=HyGBdo0qFm}.

\bibitem[Radford et~al.(2019)Radford, Wu, Child, Luan, Amodei, Sutskever, et~al.]{radford2019language}
Alec Radford, Jeffrey Wu, Rewon Child, David Luan, Dario Amodei, Ilya Sutskever, et~al.
\newblock Language models are unsupervised multitask learners.
\newblock \emph{OpenAI blog}, 1\penalty0 (8):\penalty0 9, 2019.

\bibitem[Sanford et~al.(2023)Sanford, Hsu, and Telgarsky]{Sanford2023RepresentationalSA}
Clayton Sanford, Daniel Hsu, and Matus Telgarsky.
\newblock Representational strengths and limitations of transformers.
\newblock \emph{ArXiv}, abs/2306.02896, 2023.
\newblock URL \url{https://api.semanticscholar.org/CorpusID:259075636}.

\bibitem[Strobl et~al.(2023)Strobl, Merrill, Weiss, Chiang, and Angluin]{Strobl2023WhatFL}
Lena Strobl, William Merrill, Gail Weiss, David Chiang, and Dana Angluin.
\newblock What formal languages can transformers express? a survey.
\newblock \emph{Transactions of the Association for Computational Linguistics}, 12:\penalty0 543--561, 2023.
\newblock URL \url{https://api.semanticscholar.org/CorpusID:264833196}.

\bibitem[Talmor et~al.(2019)Talmor, Herzig, Lourie, and Berant]{talmor2019commonsenseqa}
Alon Talmor, Jonathan Herzig, Nicholas Lourie, and Jonathan Berant.
\newblock Commonsenseqa: A question answering challenge targeting commonsense knowledge.
\newblock In \emph{Proceedings of the 2019 Conference of the North American Chapter of the Association for Computational Linguistics: Human Language Technologies, Volume 1 (Long and Short Papers)}, pages 4149--4158, 2019.

\bibitem[Touvron et~al.(2023)Touvron, Martin, Stone, Albert, Almahairi, Babaei, Bashlykov, Batra, Bhargava, Bhosale, et~al.]{touvron2023llama}
Hugo Touvron, Louis Martin, Kevin Stone, Peter Albert, Amjad Almahairi, Yasmine Babaei, Nikolay Bashlykov, Soumya Batra, Prajjwal Bhargava, Shruti Bhosale, et~al.
\newblock Llama 2: Open foundation and fine-tuned chat models.
\newblock \emph{arXiv preprint arXiv:2307.09288}, 2023.

\bibitem[Turpin et~al.(2024)Turpin, Michael, Perez, and Bowman]{turpin2024language}
Miles Turpin, Julian Michael, Ethan Perez, and Samuel Bowman.
\newblock Language models don't always say what they think: unfaithful explanations in chain-of-thought prompting.
\newblock \emph{Advances in Neural Information Processing Systems}, 36, 2024.

\bibitem[Vaswani et~al.(2017)Vaswani, Shazeer, Parmar, Uszkoreit, Jones, Gomez, Kaiser, and Polosukhin]{vaswani2017attention}
Ashish Vaswani, Noam Shazeer, Niki Parmar, Jakob Uszkoreit, Llion Jones, Aidan~N. Gomez, \L{}ukasz Kaiser, and Illia Polosukhin.
\newblock Attention is all you need.
\newblock In \emph{Proceedings of the 31st International Conference on Neural Information Processing Systems}, NIPS'17, page 6000–6010, Red Hook, NY, USA, 2017. Curran Associates Inc.
\newblock ISBN 9781510860964.

\bibitem[Wei et~al.(2022)Wei, Wang, Schuurmans, Bosma, brian ichter, Xia, Chi, Le, and Zhou]{wei2022chain}
Jason Wei, Xuezhi Wang, Dale Schuurmans, Maarten Bosma, brian ichter, Fei Xia, Ed~H. Chi, Quoc~V Le, and Denny Zhou.
\newblock Chain of thought prompting elicits reasoning in large language models.
\newblock In Alice~H. Oh, Alekh Agarwal, Danielle Belgrave, and Kyunghyun Cho, editors, \emph{Advances in Neural Information Processing Systems}, 2022.
\newblock URL \url{https://openreview.net/forum?id=_VjQlMeSB_J}.

\end{thebibliography}

\newpage
\appendix
\onecolumn
\section{Further related work} \label{app:further}

\noindent\textbf{Communication complexity.} A related line of work has considered Transformers from a communication complexity perspective, though these results are mostly applicable to single-layer models. \citet{Peng2024OnLO} use communication complexity techniques to show limitations in the representational power of single-layer Transformers, while \citet{Sanford2023RepresentationalSA} and \citet{bhattamishra2024separations} use them to contrast the expressivity of Transformers and recurrent models. More recently, \citet{Chen2024TheoreticalLO} have extended some of these results to the multi-layer case.

\noindent\textbf{Formal languages and automata.}
Many papers have analysed Transformers through the lens of formal languages \citep{Bhattamishra2020OnTA,Strobl2023WhatFL,Merrill2023FormalLA,Deletang2022NeuralNA}, attempting to determine the class of languages that they can recognise. \citet{liu2023transformers} proved theoretically that Transformers with $O(\log n)$ depth can simulate all finite-state automaton on inputs of length $n$, and so require depth that increases (mildly) with input length to capture all regular languages.

\section{Logspace uniformity and circuit descriptions}

A logspace Turing machine may not have the space to write down its output, if the output size is $\Omega(\log n)$. For this reason, we instead require that the TM can compute any desired bit of the output in logarithmic space, i.e. it is \emph{implictly logspace computable} \citep{arora2009computational}.

\begin{definition} (Implicitly logspace computable)
    A function $f : \{0, 1\}^* \to \{0, 1\}^*$ is implicitly logspace computable if the mapping $x,i \mapsto f(x)_i$ can be computed by a Turing machine using logarithmic space.
\end{definition}

\begin{definition} (Logspace uniformity)
    A circuit family $\{C_n\}$ is logspace-uniform if there is an implicitly logspace computable function mapping $1^n$ to the description of the circuit $C_n$.
    \label{def:lu}
\end{definition}

\begin{definition} (Polylogspace uniformity)
    A circuit family $\{C_n\}$ is logspace-uniform if there is an implicitly polylogspace computable function mapping $1^n$ to the description of the circuit $C_n$.
    \label{def:plu}
\end{definition}

We will define circuit descriptions $\text{Desc}(C_n)$ for $C_n$ as a topologically ordered string of vertices, \( \text{Vertex}(1), \ldots, \text{Vertex}(|C_n|) \), where the first \( n \) vertices correspond to input variables \( \text{Inp}(1), \ldots, \text{Inp}(n) \), and the remaining vertices represent logic gates. For $\ac^0$ circuits, each gate vertex \( \text{Vertex}(i) \) is defined by the string: 

\begin{enumerate}
    \item \textbf{Arguments:} \( \text{Args}(i, j) \), which specifies that the output of vertex \( j \) serves as an input to vertex \( i \).  
    \item \textbf{Type:} \( \text{Type}(i) \), indicating the gate's operation (\texttt{AND}, \texttt{OR}, \texttt{NOT}).
\end{enumerate}

For $\tc^0$ circuits, \( \text{Vertex}(i) \) is defined by the string:

\begin{enumerate}
    \item \textbf{Arguments:} \( \text{Args}(i, j) \), which specifies that the output of vertex \( j \) serves as an input to vertex \( i \).  
    \item \textbf{Direction:} $\text{Dir}(i)$, which species if the threshold gate is $>$ or $<$.
    \item \textbf{Threshold:} \( \text{Thresh}(i) \), a numerical value $\theta$ representing the threshold for activation of the gate.
\end{enumerate}

The topological order ensures that if we computed the circuit in order from its description, all inputs to a vertex would already have been computed by the time the vertex itself is evaluated.
We will define the length of the description $|\text{Desc}(C_n)|$ as the total number of input, argument, and type or threshold strings.

\section{Transformer-circuit equivalence}

In this section, we will demonstrate the equivalence between Transformers with pause tokens, and the circuit classes $\ac^0$ and $\tc^0$. We will demonstrate the equivalence in the uniform case, but all the proofs still hold if these uniformity requirements are relaxed.

\subsection{Constant precision}

We will first consider Transformers with constant precision, $p(n) = O(1)$. While a full precision FP$32$ Transformer will almost always have precision $\geq \log n$, for any realistic input length $n$, more and more LLMs are being served with aggressive quantization, with $8$-bit, $4$-bit, and even $1.58$-bit models becoming more popular. It therefore becomes necessary to understand the computational power of models under these circumstances. The constant precision regime also allows us to show strong separations between Transformers with and without pause tokens, due to the existence of universal $\ac^0$ lower bounds.

\begin{theorem}
    $\ac^0 \subseteq$ $\tf[1, L, P]$
    \label{thm:ac0_tf}
\end{theorem}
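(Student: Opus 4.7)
The plan is to show, given a logspace-uniform $\ac^0$ circuit family $\{C_n\}$ computing a function $f$, how to build a logspace-uniform Transformer family in $\tf[1,L,P]$ that simulates $C_n$ on input $x$ using two Transformer layers per circuit layer, exactly as depicted in Figure~\ref{fig:tf_circ}. The token sequence will consist of the $n$ input bits $x_1,\ldots,x_n$ followed by a polynomial number of pause tokens laid out in topological order: for every non-input vertex $i$ I include one vertex token, and for every incoming edge $(j,i)$ I include one argument token $\argc(i,j)$. Causal masking is then compatible with the direction of computation, and the constant-depth circuit is simulated by a constant number of Transformer layers.

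\textbf{Encoding the circuit in positional encodings.} Because $\{C_n\}$ is logspace uniform, for each token position one can compute in logspace the role of that token ($\inp(i)$, $\argc(i,j)$, or $\type(i)$) together with the relevant indices $i,j$ and the gate type. I will pack this $O(\log n)$-bit structural information into the positional encoding $\phi(i,n)$, which is exactly the kind of size-dependent encoding permitted by Definition~\ref{def:log_tf}. The Transformer's learned weights are then independent of $C_n$; all routing between positions is driven by $\phi$, and the construction yields the same Transformer architecture for every $n$.

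\textbf{Simulating one circuit layer with two Transformer layers.} In the first layer of each pair, each argument token $\argc(i,j)$ uses its query (derived from its positional encoding) to match the key of the vertex token at position $j$; query-key products at all other positions are driven to $-B_p$ by the positional encodings, so the softmax places essentially all its mass on $j$ and the value head copies the current bit $v_j$ into the residual stream of $\argc(i,j)$. In the second layer, each vertex token $i$ attends uniformly to its argument tokens $\{\argc(i,j)\}_j$ (again by driving unwanted scores to $-B_p$), producing a weighted combination of the copied values. A position-wise feedforward network, using a constant number of ReLUs and reading the gate type $\type(i)$ from the positional encoding, then applies the appropriate threshold to turn this quantity into the Boolean output of an $\texttt{AND}$, $\texttt{OR}$, or $\texttt{NOT}$ gate.

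\textbf{Main obstacle.} The most delicate step is reconciling polynomial fan-in with constant precision, since under $p = O(1)$ neither $k$ nor $1/k$ is representable for large $k$. The resolution is to exploit saturation arithmetic so that the attention layer produces a value that is ``nonzero iff $\texttt{OR}$'' and, by applying the same construction to the complemented inputs $1 - v_j$, a value that is ``zero iff $\texttt{AND}$''. Both conditions reduce to thresholding near $0$ or $\pm B_p$, which is robust even when exact sums or averages cannot be stored. Choosing the scaling constants in the query-key products and softmax so that the saturation behaviour matches the required threshold, and verifying that the whole construction fits within $O(1)$ precision and $O(\log n)$ embedding width, is the bookkeeping I would work out most carefully; once it is in place, the constant-depth simulation of $C_n$ is immediate.
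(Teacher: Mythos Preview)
Your proposal is correct and follows essentially the same approach as the paper: the same token layout (inputs, then $\argc$ and $\type$ tokens in topological order), the same two-Transformer-layers-per-circuit-layer simulation, and the same De~Morgan trick of handling $\texttt{AND}$ by complementing inputs and testing for a zero sum. Your identification of the main obstacle and its resolution via saturation arithmetic (the sum is nonzero iff some input is nonzero, and the minimal nonzero post-normalisation attention weight $1/B_p$ is still representable) matches the paper's argument exactly.
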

\label{ac_less}

\begin{proof}
    Consider any logspace uniform $\ac^0$ circuit family $\{C_n\}$. Given a Turing machine that generates the description of $\{C_n\}$, we can modify it to instead generate the position encodings and parameters of a constant precision Transformer family $\{T_n\}$ that simulates the circuits, using $b(n) \leq |\text{Desc}(C_n)|$ pause tokens.

    \noindent\textbf{Positional encodings.} We will construct a positional encoding for every input symbol $\text{Inp}(i)$, argument symbol $\text{Arg}(i, j)$, and gate symbol $\text{Type}(i)$. All encodings have dimension $d(n) = O(\log n)$, and we will use the notation $\bin(i)$ to refer to the binary vector representation of $0 \leq i \leq b(n)$, and $\sbin(i) = 2\bin(i) -1$. We will use $\vt{x}^\smallfrown \vt{y}$ to represent the interleaving of two vectors $\vt{x}$ and $\vt{y}$ of the same dimensionality, so that $(\vt{x}^\smallfrown \vt{y})_{2i - 1} = x_i$ and $(\vt{x}^\smallfrown \vt{y})_{2i} = y_i$. Finally, we will define $\vt{k}(i) = \sbin(i)^\smallfrown \vt{1}$, and $\vt{q}(i) = B_p(\sbin(i)^\smallfrown (-\vt{1}))$. This means that 

    \begin{align}
    \key(i)^T\query(j) = 
        \begin{cases}
            0 \quad &\text{if } i = j \\
            -B_p &\text{otherwise.}
        \end{cases}
    \end{align}

    Post-exponentiation, the pre-normalized attention values will be 1 for $i = j$ and 0 everywhere else. We reserve $0$ as a privileged index, such that for all indices $i$ in the circuit description $\key(0)^T \query(i) = -B_p$. Our position encodings will then take the following forms:

    \begin{itemize}[leftmargin=*,itemsep=0em]
        \item Inputs
            \begin{align*}
                \phi(\text{Inp}(i)) = (0, 0, 0, \key(i), \query(i), \key(i), \query(i)).
            \end{align*}
        \item Arguments
            \begin{align*}
                \phi(\text{Arg}(i, j)) = 
                    \begin{cases}
                        (0, 0, 1, \key(0), \query(j), \key(i), \query(i)) \quad &\text{if $\text{Type}(i)$ is $\texttt{OR}$} \\
                        (1, 0, 1, \key(0), \query(j), \key(i), \query(i)) &\text{otherwise}.
                    \end{cases}
            \end{align*}
        \item Gates
            \begin{align*}
                \phi(\text{Type}(i)) = 
                \begin{cases}
                    (0, 1, 0, \key(i), \query(i), \key(0), \query(i)) \quad &\text{if $\text{Type}(i)$ is $\texttt{AND}$} \\
                    (0, 0, 0, \key(i), \query(i), \key(0), \query(i)) &\text{otherwise}.
                \end{cases}
            \end{align*}
    \end{itemize}

    We will denote the positional encoding corresponding to circuit $C_n$ as $\phi(C_n)$. Given a description of $C_n$ in prefix form, $\phi(C_n)$ has the following structure

    \begin{align*}
        \phi(C_n) =
        \left[
            \begin{array}{@{}c@{}c@{}c@{}c@{}c@{}c@{}c@{}c@{}c@{}}
            | & & | & | & & | & | & & | \\
            \phi(\inp(1)) & \dots & \phi(\inp(n)) & \phi(\argc(n+1, j)) & \dots & \phi(\argc(n+1, k)) & \phi(\type(n+1)) & \dots & \phi(\type(|C_n|)) \\
            | & & | & | & & | & | & & | 
            \end{array}
        \right]
    \end{align*}

    The input to the Transformer is then a binary string $x$ of length $n$, with $b(n) = \text{poly}(n)$ pause tokens appended to it. We choose to use $0$ for the pause tokens to simplify the construction, but any fixed $x \in \mathbb{F}_p$ will work, as it can be corrected for in the feedforward layers. We append the position encodings at each position, to give vectors in $O(\log n)$. In matrix form, the encoded input $X$ appears as

    \begin{align*}
        X =
        \left[
        \begin{array}{ccccccc}
        \multicolumn{3}{c|}{x} & \multicolumn{4}{c}{0} \\ \cline{1-7}
         &  &  &  &  &  & \\
         &  &  & \phi(\text{Desc}(C_n))  &  & & \\
         &  &  &  &  & &
        \end{array}
        \right]
    \end{align*}

    \noindent\textbf{Transformer layers.} For each individual layer in the $\ac^0$ circuit, we will construct $2$ (causally masked) Transformer layers that perform the same computation. For an $\ac^0$ circuit of depth $l$, this will result in a Transformer with $2l$ (and hence constant) layers.

    The proof proceeds by induction. If we assume that the first $l$ layers of the circuit have been computed correctly, then at each $\type(i)$ token location in the input preceding layer $l+1$, we have the computed value of $\text{Vertex}(i) = v_i$ (this is $x_i$ for $\inp(i)$, and the recursively computed value for $\type(i)$). Each argument token $\argc(i, j)$ will have value $0$. For $\type(i)$ tokens in layers $> l$ the value at this point can be arbitrary (denoted by $\square$), as it will be overwritten. For convenience, we will denote the set of vertices in the first $l$ layers of the circuit $C_n$ by $C_n[\leq l]$. We will demonstrate that the next two Transformer layers correctly compute the output values for the gates in layer $l+1$.

    The first attention layer copies previously computed vertex values into the correct argument positions for layer $l+1$. We construct $W^{QK}$ such that query

    \begin{itemize}[leftmargin=*,itemsep=0em]
        \item $\inp(i)$ attends only to itself.
        \item $\argc(i, j)$ attends only to $\text{Vertex}(j)$.
        \item $\type(i)$ attends only to itself.
    \end{itemize}

    After softmax and normalization, we will have an attention value of 1 on the token $\vt{u}$ it attends to, and 0 everywhere else. The value matrix $W^V$ will pass through the first index of the token attended to ($\vt{u}_1$), and zero out everything else (explicit matrices for $W^{QK}$ and $W^V$ can be found in \cref{sec:attn}). Concretely, after the attention layer and residual connection, for each token type we will have

    \begin{align}
        \inp(i) \leftarrow &2v_i \\
        \argc(i, j) \leftarrow &\begin{cases} 
            v_j \quad &\text{if $\text{Vertex}(j) \in C_n[\leq l]$}\\
            \square \quad &\text{otherwise}.
        \end{cases} \\
        \type(i) \leftarrow & \begin{cases}
            2 v_i \quad &\text{if $\text{Vertex}(i) \in C_n[\leq l]$} \\
            \square \quad &\text{otherwise}.
        \end{cases}.
    \end{align}

    We get the factors of $2$ due to self-attention and residual connections, but these are corrected for in the feedforward network. We apply a constant-depth feedforward network to each token $\vt{u}$, computing $\vt{u}_1 \leftarrow \mathbb{I}[(\vt{u}_1 - \vt{u}_2) \neq 0] - \vt{u}_1, \: \vt{u}_{i\neq1} \leftarrow 0$ (for the $\relu$ network construction, see \cref{sec:ffn}). $\vt{u}_2$ is 1 if $\vt{u}$ corresponds to $\argc(i, j)$ and $\type(i)$ is \texttt{AND} or \texttt{NOT}, and 0 otherwise. Therefore, post feedforward and residual connection, we have

    \begin{align}
        \inp(i) \leftarrow &v_i \\
        \argc(i, j) \leftarrow &\begin{cases}
            v_j \quad &\text{if $\type(i)$ is \texttt{OR} and $\text{Vertex(j)} \in C_n[\leq l]$}, \\
            1 - v_j \quad &\text{if $\type(i)$ is \texttt{AND} or \texttt{NOT} and $\text{Vertex(j)} \in C_n[\leq l]$}, \\
            \square \quad &\text{otherwise}.
        \end{cases} \\
        \type(i) \leftarrow & \begin{cases}
            v_i \quad &\text{if $\text{Vertex}(i) \in C_n[\leq l]$} \\
            \square \quad &\text{otherwise}.
        \end{cases}.
    \end{align}

    The positional encodings of all tokens are unaffected by these layers, as they zero out the positional information, and the residual connections copy it over.
    
    The second Transformer layer then computes the gate outputs for layer $l+1$. Due to our use of saturation arithmetic, we can attend an arbitrary subset of input locations with constant precision. The maximum value of the normalizing constant in the attention computation is $B_p$, and post-exponentiation attention values are either 0 or 1, so our minimal non-zero attention value is $1/B_p \geq 1/2^p$, and so is still representable as a non-zero value. $W^{QK}$ is constructed such that query

    \begin{itemize}[leftmargin=*,itemsep=0em]
        \item $\inp(i)$ attends only to itself.
        \item $\argc(i, j)$ attends only to itself.
        \item $\type(i)$ attends to all $\argc(i, \cdot)$, and does not attend to itself.
    \end{itemize}

    $W^{PV}$ again only allows through $\vt{u}_1$. If we define $m_i$ as the input arity of $\text{Vertex}(i)$, and define $\hat{m}_i = [m_i]_p$, then the representation of each token after the attention layer and residual connection becomes

    \begin{align}
        \inp(i) \leftarrow &2v_i \\
        \argc(i, j) \leftarrow &\begin{cases}
            2v_j \quad &\text{if $\type(i)$ is \texttt{OR} and $\text{Vertex(j)} \in C_n[\leq l]$}, \\
            2 - 2v_j \quad &\text{if $\type(i)$ is \texttt{AND} or \texttt{NOT} and $\text{Vertex(j)} \in C_n[\leq l]$}, \\
            \square \quad &\text{otherwise.}
        \end{cases} \\
        \type(i) \leftarrow &\begin{cases}
            \frac{1}{\hat{m}_i}\sum_{\{j \: | \: \argc(i, j)\}} v_j \quad &\text{if $\type(i)$ is \texttt{OR} and $\text{Vertex(i)} \in C_n[\leq l+1]$}, \\
            \frac{1}{\hat{m}_i}\sum_{\{j \: | \: \argc(i, j)\}} (1 - v_j) \quad &\text{if $\type(i)$ is \texttt{AND} or \texttt{NOT} and $\text{Vertex(i)} \in C_n[\leq l+1]$}, \\
            \square \quad &\text{otherwise.}
        \end{cases}
    \end{align}

    The addition is iterated fixed point addition, but all we require is that the result will be non-zero if any of the addition terms is non-zero. The value of $\type(i)$ will be $> 0$ if it is: \texttt{OR} and any argument $v_j = 1$; \texttt{AND} and any argument $v_j = 0$ ; \texttt{NOT} and the argument $v_j = 0$. Otherwise, the value will be 0. The feedforward network will then compute 

    \begin{align}
        \vt{u}_1 \leftarrow &\begin{cases}
            -\vt{u}_1 \quad &\text{if $\vt{u}_4 = 1$ ($\vt{u}$ is an $\argc$ token)},\\
            1 - \mathbb{I}[\vt{u}_1 > 0] - \vt{u}_1 \quad &\text{if $\vt{u}_4 = 0$ and $\vt{u}_3 = 1$ ($\vt{u}$ is $\type(i)$ with type \texttt{AND})}, \\
            \mathbb{I}[\vt{u}_1 > 0]  - \vt{u}_1 \quad &\text{otherwise}.
        \end{cases} \\
        \vt{u}_{i \neq 1} \leftarrow &0
    \end{align}

    where $u$ is any token. Combined with the residual connection, this feedforward layer sets $\vt{u}_1$ to 1 if $\vt{u}_1 = 0$ and $\vt{u}$ corresponds to an \texttt{AND} gate, 0 if $\vt{u}$ is an $\argc$ token, and $\mathbb{I}[\vt{u}_1 > 0]$ for all other tokens. We can see that for each token $\type(i)$ in circuit layers $\leq l+1$, if it is 

    \begin{itemize}[leftmargin=*,itemsep=0em]
        \item \texttt{NOT}: the first Transformer layer copies the arguments to the correct position and computes \texttt{NOT} ($1 - v_j$), and the second layer copies it to the gate location.
        \item \texttt{OR}: the first layer copies all arguments to the correct position, and the second layer computes \texttt{OR} by summing the inputs and returning $1$ only if the sum is $> 0$.
        \item \texttt{AND}: the first layer copies all arguments to the correct position and computes their \texttt{NOT} ($1 - v_j$), and the second layer computes the negation of the \texttt{OR} of the negated arguments, by returning $1$ only if the sum $= 0$. By De Morgan's laws, this is an \texttt{AND} gate.
    \end{itemize}

    Thus, following the evaluation of these two Transformer layers, the residual stream will now contain the value of all the gates in layers $\leq l+1$ of the $\ac^0$ circuit, the input tokens are unchanged, and all arguments have value $0$, as required to compute the next layer. Evaluating all $2l$ layers will result in the final value of $C_n$ being output at the final position.

\end{proof}

\begin{lemma} \citep{merrill2023parallelism}
    Let $f : \{0, 1\}^* \rightarrow \{0, 1\}^m$ be a linear space computable function. There exists a Turing machine that, for all $n \in \mathbb{N}$ and $c \in \mathbb{R}+$, uses at most $c \log n + \log m$ space to map input $1^n$ to an $\ac^0$ circuit of size at most $n c + c \log n + m$ and depth 3 that computes $f$ on inputs of size at most $c \log n$.
    \label{lem:merrill}
\end{lemma}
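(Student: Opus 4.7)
The plan is to build the $\ac^0$ circuit as a shared depth-$3$ DNF, a standard trick for functions on short inputs. Since every input has length at most $c \log n$, there are at most $2^{c \log n} = n^c$ possible input assignments, and each output coordinate of $f$ is fully determined by its truth table over these assignments.

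First, I would construct the circuit in three layers:
\begin{enumerate}
\item A bottom layer of at most $c \log n$ \texttt{NOT} gates, producing the negations of the input literals.
\item A middle layer of at most $n^c$ \texttt{AND} gates, one per input assignment $a \in \{0,1\}^{c \log n}$; the gate for $a$ is the minterm that fires iff the input equals $a$, taking as inputs the literals or their negations according to the bits of $a$.
\item A top layer of $m$ \texttt{OR} gates, where the $j$-th gate is fed by exactly those minterms corresponding to assignments $a$ with $f(a)_j = 1$.
\end{enumerate}
Sharing the literal and minterm layers across all output bits gives total size at most $c \log n + n^c + m$ and depth $3$, matching the claim.

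Next, I would give the Turing machine that emits a description of this circuit on input $1^n$. It maintains two counters: an assignment counter over $\{0,1\}^{c \log n}$ (needing $c \log n$ bits) and an output index counter over $[m]$ (needing $\log m$ bits). To print the gate descriptions in a fixed canonical order, the machine iterates through these counters, and for each pair $(a, j)$ it invokes the linear-space algorithm for $f$ on input $a$ to decide whether the $a$-th minterm is wired to the $j$-th \texttt{OR} gate. Since $|a| \leq c \log n$, evaluating $f(a)$ needs only $O(c \log n)$ workspace, which can be wiped and reused between calls. The literal and minterm layer descriptions themselves require only the assignment counter together with the bit index being produced, so the total workspace stays within $c \log n + \log m$.

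The main obstacle is the careful space accounting: the linear-space subroutine for $f$, the two counters, and any temporary indices into the emerging description must all be overlaid on the same $c \log n + \log m$ budget. The key design decision is fixing a canonical description format (for instance, listing literals, then minterms in lex order of $a$, then \texttt{OR} gates in order of $j$, each with a fixed-width wire listing) so that the content at any requested bit position can be recomputed from the counters alone, without the machine ever needing to remember what it has already emitted. Once the format is pinned down, the simulation of $f(a)$ and the counter enumeration compose in the standard logspace manner, and the size, depth, and space bounds follow immediately from the construction above.
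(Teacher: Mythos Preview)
The paper does not prove this lemma; it is quoted verbatim from \citet{merrill2023parallelism} and used as a black box. Your reconstruction is the standard truth-table/DNF argument that underlies the cited result: enumerate all $2^{c\log n}=n^c$ assignments as minterms, then wire each output \texttt{OR} gate according to the bits of $f$, with the logspace generator just looping over assignment and output counters and calling the linear-space decider for $f$ on each $c\log n$-bit input. This is correct and is essentially how Merrill and Sabharwal prove it, so there is nothing to compare against within this paper.

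One minor point of care in your space accounting: you assert the workspace for evaluating $f(a)$ can be ``wiped and reused'' and overlaid with the counters, but strictly speaking the assignment counter $a$ must persist while $f$ runs on it (it is the input tape to the subroutine), so the honest bound is $O(c\log n)$ for the counter plus $O(c\log n)$ for the simulation of $f$, which is still $O(c\log n)$ and fine. The statement's bound of ``$c\log n + \log m$'' should be read up to constant factors (as is standard in the cited paper), so this is not a gap, just something to state cleanly.
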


\begin{lemma} (\citet{li2024chain}) 
    For any fixed $p \in \mathbb{N}$, \( \mathrm{sum}_p : (\mathbb{F}_p)^n \rightarrow \mathbb{F}_p \) has uniform \( \ac^0 \) circuits.
    \label{lem:li}
\end{lemma}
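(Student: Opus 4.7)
The plan is to recast $\mathrm{sum}_p$ as the word problem of a fixed finite automaton and then invoke the classical equivalence between aperiodic (star-free) regular languages and uniform $\ac^0$. Since $p$ is fixed, $\mathbb{F}_p$ is finite of constant size. Let $A_p$ be the deterministic finite automaton with state set and alphabet both equal to $\mathbb{F}_p$, start state $0$, and transition $\delta(s,x) = \mathrm{sat}(s+x)$. Then $\mathrm{sum}_p(x_1,\dots,x_n)$ is precisely the state reached by $A_p$ after reading $x_1 \cdots x_n$. Because $|\mathbb{F}_p| = O(1)$, it suffices to place the language $L_c = \{x : \mathrm{sum}_p(x) = c\}$ in uniform $\ac^0$ for each $c \in \mathbb{F}_p$; a constant-size multiplexer combining the $O(1)$ membership tests then computes $\mathrm{sum}_p$.

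The heart of the proof is to show that the syntactic monoid of $A_p$ is aperiodic. Each word $w = x_1 \cdots x_k$ induces a transition $\sigma_w : \mathbb{F}_p \to \mathbb{F}_p$. The one-letter maps $s \mapsto \mathrm{sat}(s+x)$ are monotone non-decreasing in $s$, because translation and the clamp $\max(-B_p,\min(B_p,\cdot))$ both preserve the usual order on $\mathbb{R}$. Composition preserves monotonicity, so every $\sigma_w$ is monotone non-decreasing. A monotone non-decreasing self-map of a finite totally ordered set cannot admit a non-trivial cycle: if $s_0 < s_1 < \cdots < s_{d-1}$ were cyclically permuted by $\sigma_w$, monotonicity would force the induced permutation of indices to be strictly order-preserving, and the only strictly order-preserving permutation of $\{0,\dots,d-1\}$ is the identity, forcing $d=1$. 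Hence every orbit of $\sigma_w$ is eventually fixed, i.e.\ $\sigma_w^k = \sigma_w^{k+1}$ for some finite $k$, which is exactly aperiodicity of the syntactic monoid.

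To finish, I would invoke the Sch\"utzenberger--McNaughton--Papert theorem (a regular language has aperiodic syntactic monoid iff it is star-free iff it is $\mathrm{FO}[<]$-definable), together with the standard translation of $\mathrm{FO}[<]$ sentences into uniform $\ac^0$ circuits. Applied to each $L_c$, this produces the required $O(1)$ membership tests. Uniformity is immediate: since $p$ is fixed, $A_p$ is a constant-size object and the resulting circuit family depends only on the input length $n$, so it is logspace-constructible.

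The main obstacle I anticipate is the aperiodicity step. The argument collapses to a one-line observation once monotonicity of $\sigma_w$ is noticed, but spotting this structural property of saturated addition is the essential insight; a direct combinatorial analysis of compositions of saturating shifts (tracking which trajectories saturate, and when) would be considerably more tedious. The remaining ingredients are standard descriptive-complexity facts, with the only care required being to cite a uniform version of the $\mathrm{FO}[<] \subseteq \ac^0$ translation, to match the uniformity stipulated in the lemma.
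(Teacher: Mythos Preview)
Your proposal is correct and follows essentially the same route as the paper's remark: both observe that iterated saturated addition is computed by an aperiodic (counter-free) automaton, invoke Sch\"utzenberger--McNaughton--Papert to obtain star-freeness and $\mathrm{FO}[<]$-definability, and then apply the standard translation into uniform $\ac^0$. The only difference is that the paper cites \citet{li2024chain} for the counter-free claim, whereas you supply a self-contained proof via the monotonicity of saturating shifts---a clean addition that the paper does not spell out.
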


\textit{Remark.} While \citet{li2024chain} do not explicitly state that their construction is uniform, their proof hinges on the fact that constant precision saturation arithmetic is computable by a counter-free automata. Such automata recognize star-free languages, which are definable in FO[$<$] \citep{mcnaughton1971counter}. Since FO[$<$] corresponds to DLOGTIME-uniform $\ac^0$ \citep{immerman1998descriptive}, and DLOGTIME $\subset L$, the construction is logspace uniform.

\begin{lemma}
    Exponentiation $[e^x]_p$, multiplication $[xy]_p$, and division $[x/y]_p$ of constant precision numbers $x, y \in \mathbb{F}_p$ has uniform $\ac^0$ circuits.
    \label{lem:exp_ac0}
\end{lemma}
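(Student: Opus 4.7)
The plan is to exploit the fact that constant precision $p = O(1)$ makes $\mathbb{F}_p$ a set of only $|\mathbb{F}_p| = 2 \cdot 2^{2p} + 1 = O(1)$ elements, each with a fixed $O(1)$-bit encoding. Consequently, each of the three operations is simply a function between finite domains of constant size, which reduces the task to building an $\ac^0$ circuit for an arbitrary Boolean function of constant arity.

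First I would verify that each operation is well-defined as a total function on $\mathbb{F}_p$: for $[e^x]_p$, saturation arithmetic clamps any overflow into $[-B_p, B_p]$ before rounding; for $[x/y]_p$, any fixed convention (e.g.\ $[x/0]_p = 0$) handles division by zero; and $[xy]_p$ is handled analogously with saturation for overflow. Each operation therefore becomes some $f : \{0,1\}^{O(1)} \to \{0,1\}^{O(1)}$. I would then build each of the $O(1)$ output bits as a DNF over the $O(1)$ input bits; since both the arity and the number of satisfying assignments are constant, each DNF has size at most $2^k \cdot k = O(1)$, giving a depth-$2$ Boolean circuit of constant size and hence in $\ac^0$ in total.

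For uniformity, the circuit description has constant length and does not depend on $n$ in any way, so a Turing machine on input $1^n$ can write it out in $O(1) \subseteq O(\log n)$ space, satisfying \cref{def:lu} trivially. As a sanity check one can instead invoke \cref{lem:merrill} on the constant-domain function, which is trivially linear-space computable and therefore admits uniform $\ac^0$ circuits; either route gives the conclusion.

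The lemma is essentially immediate once the constancy of $|\mathbb{F}_p|$ is recognized, so the main obstacle is conceptual rather than technical: one has to be careful in formalizing the rounding and saturation conventions so that the three operations truly define total functions on $\mathbb{F}_p$, and to note that the rounding operator $[\cdot]_p$ is applied \emph{after} evaluating the exact real-valued expression, which is harmless here because that expression is a fixed $O(1)$-table lookup. The lemma's role in the paper is modular, letting later containment arguments ($\tf[1, L, P] \subseteq \ac^0$) invoke constant-precision arithmetic as a black box when analysing attention and feedforward blocks.
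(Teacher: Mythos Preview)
Your proposal is correct and takes essentially the same approach as the paper: the paper's proof simply observes that these functions are finite (constant-size domain) and can therefore be implemented as a constant-size lookup table, with a logspace Turing machine trivially hardcoding the circuit description. Your version is more detailed---spelling out the DNF construction, handling division by zero, and offering \cref{lem:merrill} as an alternative route to uniformity---but the underlying idea is identical.
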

\begin{proof}
    These functions are finite and can therefore be implemented as a constant-size lookup table. A logspace TM can trivially hardcode this constant-size circuit description.
\end{proof}

\begin{theorem}
    $\tf[1, L, P] \subseteq \ac^0$
    \label{thm:tf_ac0}
\end{theorem}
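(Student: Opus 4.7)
The plan is to build, for each input length $n$, a logspace-uniform $\ac^0$ circuit that simulates the entire Transformer computation on the encoded input of length $n + b(n) = \poly(n)$. Since the Transformer has a constant number of layers, and each layer acts identically (up to parameters) on every token position, it suffices to show that (i) the positional encodings, (ii) a single self-attention head, and (iii) a single feedforward block, can each be simulated by a logspace-uniform, constant-depth, polynomial-size subcircuit; composing a constant number of such subcircuits preserves $\ac^0$.

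First I would handle the positional encodings. By \cref{def:log_tf}, the map $(1^n, i) \mapsto \phi(i,n)$ is logspace computable with output length $O(\log n)$, so \cref{lem:merrill} yields a uniform $\ac^0$ circuit that produces all positional encodings in parallel. The learned weight matrices $W^{QK}, W^V$ (and all feedforward weights/biases) are likewise emitted by a logspace TM, so a single generating Turing machine can stitch together the circuit description for the whole network within $O(\log n)$ workspace by iterating over layers, heads, and positions using $O(\log n)$-bit counters.

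Next, for a single attention layer, at each position $i$ I would compute queries $q_i = x_i W^Q$, keys $k_j = x_j W^K$, and values $v_j = x_j W^V$: each component is a sum of $O(\log n)$ products of constant-precision numbers, which is in $\ac^0$ by \cref{lem:exp_ac0} and \cref{lem:li}. The pre-softmax scores $q_i^\top k_j$ (with mask $M$) are then constant-precision quantities, so their exponentials $e_{ij} = [\exp(q_i^\top k_j)]_p$ lie in $\ac^0$ by \cref{lem:exp_ac0}. The softmax denominator $Z_i = \sum_j e_{ij}$ is an iterated sum of $\poly(n)$ constant-precision numbers and is in $\ac^0$ by \cref{lem:li}; the normalized weights $\alpha_{ij} = [e_{ij}/Z_i]_p$ are division of constant-precision numbers, also $\ac^0$. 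Finally, each coordinate of the output $\sum_j \alpha_{ij} v_j$ is again an iterated sum of $\poly(n)$ constant-precision products, which stays in $\ac^0$. A fixed number of attention heads are combined additively, which adds only $O(1)$ more depth. The feedforward block is simpler: it is a fixed-depth, $O(d(n))$-width $\relu$-network acting position-wise on $O(\log n)$-sized vectors, hence $O(\log n)$ constant-precision arithmetic operations per token, all of which are in $\ac^0$ by \cref{lem:exp_ac0,lem:li}. Residual additions are single constant-precision sums.

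Composing the $O(1)$ Transformer layers, each of constant $\ac^0$ depth and $\poly(n)$ size, yields a constant-depth, polynomial-size circuit; since the generating TM runs in $O(\log n)$ space, the resulting family is logspace uniform. Reading out the designated output coordinate of the final token gives the Boolean output. The main obstacle is the bookkeeping: verifying that iterated sums over $\poly(n)$ terms of constant-precision numbers do not escape $\ac^0$ under the left-to-right, saturating rounding semantics. This is exactly where \cref{lem:li} is essential, and it allows the argument to avoid otherwise problematic issues (such as integer addition of $\poly(n)$ numbers, which is not in $\ac^0$), since saturation arithmetic with constant precision admits a counter-free automaton and therefore an $\ac^0$ implementation.
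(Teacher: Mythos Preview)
Your proposal is correct and follows essentially the same approach as the paper: decompose the Transformer into positional encodings, attention, and feedforward components, and show each is a logspace-uniform $\ac^0$ subcircuit using \cref{lem:merrill} for the encodings, \cref{lem:exp_ac0} for constant-precision scalar arithmetic, and \cref{lem:li} for the iterated $\poly(n)$-term sums in attention, then compose the constant number of layers. The only cosmetic difference is that the paper packages the inner-product $f(i,j)=\vt{x}_i^\top W^{QK}\vt{x}_j$ and the map $\vt{x}\mapsto W^V\vt{x}$ as logspace-computable functions on $O(\log n)$-bit inputs and invokes \cref{lem:merrill} directly, whereas you unroll them into $O(\log n)$ products and a short iterated sum; both routes land in uniform $\ac^0$.
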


\begin{proof}
    We demonstrate that every logspace uniform Transformer with logarithmic-dimensional embeddings, constant precision, and polynomial pause tokens can be simulated by a uniform circuit family. The proof proceeds by decomposing the Transformer’s operations into components computable in $\ac^0$.

    \noindent\textbf{Positional encodings.} By construction, our positional encodings can be computed in logarithmic space, with binary inputs $i$ and $n$ of dimension at most $ k \log n$, and outputs of dimension $m = k' \log n$. We can use logarithmic space to construct a counter maintaining the index $i$ (from $1$ to $b(n)$) and construct constant-size circuits to convert $i$ into binary. By \cref{lem:merrill} we can construct a logspace Turing machine mapping $1^n$ to the description of an $\ac^0$ circuit computing $\phi(i, n)$.

    \noindent\textbf{Attention layers.} We will prove that the action of a single head can be computed in uniform $\ac^0$, and as our model uses a constant number of heads, the action of the entire layer is in uniform $\ac^0$. Recall that the attention computation is defined as

    \begin{align}
        \vt{x}_i &\leftarrow W^V \sum_{j=1}^i \frac{\text{exp}(f(\vt{x}_i, \vt{x}_j))}{Z_i} \vt{x}_j \\
        f(i, j) &= \vt{x}_i^T W^{QK} \vt{x}_j \\
        Z_i &= \sum_{j=1}^i \text{exp}(f(\vt{x}_i, \vt{x}_j))
    \end{align}

    First, we demonstrate that the attention function $f$ has uniform $\ac^0$ circuits. By definition $W^{QK}$ is logspace computable, and so we can determine any entry $W^{KQ}_{ij}$ by simulating $M_1$ using logspace. $f$ can therefore be computed in logspace, by using a counters $k$ and $l$ over the length of the vectors, computing each $W^{KQ}_{kl} (\vt{x}_j)_k (\vt{x}_i)_l$, and storing the summation of these terms in constant space. By \cref{lem:merrill}, $f$ is in uniform $\ac^0$. To do this for all query and key vectors, we can use two counters over the token locations.

    Exponentiation of these attention values and multiplication with $x_j$ can be done in uniform $\ac^0$ by \cref{lem:exp_ac0}. Summation over the (up to $\poly(n)$) attention values for calculating the normalizing constant and summation of the value vectors can be done in uniform $\ac^0$ by \cref{lem:li}.

    As $W^V$ is logspace computable, we can construct a logspace computable function $g$ to compute $W^V x$, again by using counters over the indices of $W^V$. This function has a logarithmic output dimension, and so we can use \cref{lem:merrill} to prove that $g$ has a uniform $\ac^0$ circuit.

    \noindent\textbf{Feedforward layers.}
    Each feedforward layer involves multiplying a token vector with a logspace computable weight matrix. Similarly to $g$, this is in uniform $\ac^0$.

    As we have demonstrated that all Transformer computations for each layer are in uniform $\ac^0$, and we have a constant number of layers, $\tf[1, L, P] \subseteq \ac^0$.
    
\end{proof}

\begin{corollary}
    $\tf[1, L, P] = \ac^0$
    \label{thm:eq_ac0}
\end{corollary}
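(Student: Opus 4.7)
The plan is straightforward: this corollary follows by combining the two inclusions established in the immediately preceding theorems. Specifically, \cref{thm:ac0_tf} gives $\ac^0 \subseteq \tf[1, L, P]$ via an explicit construction that turns any logspace-uniform $\ac^0$ circuit family into a constant-depth, constant-precision Transformer using a polynomial number of pause tokens, while \cref{thm:tf_ac0} gives the reverse inclusion $\tf[1, L, P] \subseteq \ac^0$ by decomposing each Transformer operation (positional encoding, single-head attention, feedforward, iterated summation) into sub-computations already shown to lie in uniform $\ac^0$. Chaining these two containments yields $\tf[1, L, P] = \ac^0$ in a single line.

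The only sanity check worth performing is that the two directions use compatible notions of uniformity. \cref{thm:ac0_tf} turns a logspace-uniform $\ac^0$ family into a logspace-uniform Transformer family in the sense of \cref{def:log_tf}, and \cref{thm:tf_ac0} goes the other way under the same uniformity setting. Since the definitions line up, the equality holds uniformly; dropping the uniformity requirements in both proofs simultaneously yields the non-uniform version, which is what the subsequent non-uniform separation corollary implicitly relies on.

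Because the heavy lifting has already been done in the two preceding theorems, I do not anticipate any real obstacle here: the corollary is essentially a one-line deduction. If anything, the conceptually delicate work lies upstream in \cref{thm:ac0_tf}, where the simulation must fit within constant precision while handling polynomial fan-in, a feat made possible only by the saturation-arithmetic convention that keeps the smallest nonzero post-softmax attention weight representable (it is $1/B_p$ rather than $1/n$). No additional lemma or construction is needed at this stage.
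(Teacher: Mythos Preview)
Your proposal is correct and matches the paper's approach exactly: the corollary is stated without proof precisely because it follows immediately by combining the inclusions of \cref{thm:ac0_tf} and \cref{thm:tf_ac0}. Your remarks on the compatibility of the uniformity notions and the role of saturation arithmetic are accurate and consistent with the surrounding discussion in the paper.
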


\subsection{Logarithmic precision}
\label{sec:tf_tc0}

\begin{theorem}
\label{thm:uniform-TF}
$\tc^0 \subseteq \tf[L, L, P]$.
\end{theorem}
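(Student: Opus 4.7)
The plan is to mirror the proof of \cref{thm:ac0_tf}: given a logspace-uniform $\tc^0$ circuit family $\{C_n\}$ of depth $l$, I construct a Transformer with $2l$ layers and a polynomial number of pause tokens, where each pair of Transformer layers simulates one circuit layer. The positional-encoding structure with separate tokens for inputs $\inp(i)$, argument edges $\argc(i,j)$, and gates $\type(i)$ transfers over directly, as does the key/query construction based on $\vt{k}(i),\vt{q}(i)$ that makes each argument token attend uniquely to its source vertex. The only substantive change to the encoding is that a gate token $\type(i)$ now also carries, in additional coordinates of its positional embedding, the threshold value $\thresh(i)$ and direction bit $\text{Dir}(i)\in\{>,<\}$. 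Since thresholds are bounded by the circuit size $\poly(n)$, they fit in $O(\log n)$ bits and are compatible with the $O(\log n)$ embedding dimension.

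The first Transformer layer of each pair is essentially unchanged from the $\ac^0$ construction: it copies each already-computed vertex value $v_j$ into every argument token $\argc(i,j)$ targeting the next circuit layer, while input and gate tokens self-attend. Because threshold gates treat all their inputs symmetrically, there is no analogue of the De Morgan negation used for \texttt{AND}, so the feedforward after this layer need only undo the residual doubling, which is straightforward with a small ReLU network.

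The second Transformer layer performs the threshold evaluation. Each gate token $\type(i)$ attends uniformly to all of its $\hat m_i = [m_i]_p$ argument tokens $\argc(i,\cdot)$, producing the average $(1/\hat m_i)\sum_j v_j$. With $p = c\log n$ bits of precision, integer sums up to $n^c$ are represented exactly, and the final division by $\hat m_i$ introduces rounding error at most $2^{-p} = n^{-c}$. A feedforward ReLU network then compares this average against $\thresh(i)/\hat m_i$, branching on the stored $\text{Dir}(i)$ to output the indicator of the appropriate threshold condition. Because $\thresh(i)$ and $\text{Dir}(i)$ are written into $\type(i)$ by the logspace TM $M_2$, the comparison weights of the feedforward layer do not depend on $i$ and can be hard-coded once; this is compatible with the logspace uniformity definition in \cref{def:log_tf}.

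The main obstacle is numerical precision: ensuring that the rounding incurred by iterated addition of up to $\poly(n)$ log-precision values inside attention, followed by the division, cannot flip the threshold test at the boundary. The key observation is that the minimum gap between two distinct possible values of $(1/\hat m_i)\sum_j v_j$ is $1/\hat m_i \geq 1/n$, whereas the accumulated rounding error is bounded by $O(n\cdot 2^{-p}) = O(n^{1-c})$, which is strictly smaller than $1/(2n)$ once the precision constant $c$ is chosen large enough (e.g. $c\geq 3$). Since $\tc^0$ thresholds are strict inequalities, ties cannot arise, so this gap argument suffices. The remainder of the argument, namely an induction on circuit depth showing that after $2k$ Transformer layers the residual stream stores the correct values of all vertices in circuit layers $\leq k$, and the verification that $M_1$ and $M_2$ fit into logarithmic space given the circuit description produced by the uniformity TM for $\{C_n\}$, follows the $\ac^0$ proof essentially line by line.
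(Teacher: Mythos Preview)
Your proposal is correct and follows essentially the same two-layers-per-circuit-layer construction as the paper, with the same token/encoding scheme and key/query trick inherited from \cref{thm:ac0_tf}. The only noteworthy differences are cosmetic: the paper encodes the direction bit by pre-negating the argument values (and the stored threshold) in the first feedforward, so the second feedforward always tests $>0$, whereas you keep the arguments unsigned and branch on $\text{Dir}(i)$ at the end; and the paper stores $\theta/m_i$ directly in the gate encoding rather than $\theta$. Your precision discussion is more explicit than the paper's (which simply asserts that with log precision the averaging ``is performed correctly''); note only that $m_i$ and the number of summands can be $\poly(n)$ rather than $n$, so the gap is $1/\poly(n)$ and the error $O(\poly(n)\cdot 2^{-p})$, but the argument goes through identically by taking $c$ large enough.
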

\begin{proof}
    Consider any logspace uniform $\tc^0$ circuit family $\{C_n\}$. Given a Turing machine that generates the description of $\{C_n\}$, we can modify it to instead generate the position encodings and parameters of a constant precision Transformer family $\{T_n\}$ that simulates the circuits, using $b(n) \leq |\text{Desc}(C_n)|$ pause tokens.

    \noindent\textbf{Positional encodings.} Define $\key(i)$ and $\query(i)$ as in the $\ac^0$ case, though $B_p$ now refers to the largest representable number with logarithmic precision. The positional encodings are:

    \begin{itemize}[leftmargin=*,itemsep=0em]
        \item Inputs
            \begin{align*}
                \phi(\text{Inp}(i)) = (0, 0, 0, \key(i), \query(i), \key(i), \query(i)).
            \end{align*}
        \item Arguments
            \begin{align*}
                \phi(\text{Arg}(i, j)) = 
                    \begin{cases}
                        (0, 0, 1, \key(0), \query(j), \key(i), \query(i)) \quad &\text{if $\text{Dir}(i)$ is $>$} \\
                        (1, 0, 1, \key(0), \query(j), \key(i), \query(i)) &\text{otherwise}.
                    \end{cases}
            \end{align*}
        \item Gates
            \begin{align*}
                \phi(\text{Thresh}(i)) = 
                \begin{cases}
                    (0, \theta / m_i, 0, \key(i), \query(i), \key(0), \query(i)) \quad &\text{if $\text{Dir}(i)$ is $>$} \\
                    (0, -\theta / m_i, 0, \key(i), \query(i), \key(0), \query(i)) &\text{otherwise},
                \end{cases}
            \end{align*}
            where $\theta$ is the threshold value and $m_i$ is the input arity of the vertex.
    \end{itemize}

    $\phi(C_n)$ has the following structure

    \begin{align*}
        \phi(C_n) =
        \left[
            \begin{array}{@{}c@{}c@{}c@{}c@{}c@{}c@{}c@{}c@{}c@{}}
            | & & | & | & & | & | & & | \\
            \phi(\inp(1)) & \dots & \phi(\inp(n)) & \phi(\text{Arg}(n+1, j) & \dots & \phi(\text{Arg}(n+1, k) & \phi(\text{Thresh}(n+1)) & \dots & \phi(\text{Threshold}(|C_n|) \\
            | & & | & | & & | & | & & | 
            \end{array}
        \right]
    \end{align*}

    \noindent\textbf{Transformer layers.} Again, we use two Transformer layers per circuit layer, and proceed by induction, assuming that all $l$ previous circuit layers have been correctly computed.

    The first attention layer performs the same operation as in the $\ac^0$ case, copying previously computed vertices to argument locations. The first feedforward layer computes

    \begin{align}
        \vt{u}_1 \leftarrow &\begin{cases}
            \mathbb{I}[\vt{u}_1 > 0] - \vt{u}_1 \quad &\text{if $\vt{u}_2 = 0$} \\
            -\mathbb{I}[\vt{u}_1 > 0] - \vt{u}_1 \quad &\text{otherwise.}
        \end{cases} \\
        \vt{u}_{i \neq 1} \leftarrow &0.
    \end{align}

    Post feedforward and residual connection, we have

    \begin{align}
        \inp(i) \leftarrow &v_i \\
        \argc(i, j) \leftarrow &\begin{cases}
            v_j \quad &\text{if $\text{Dir}(i)$ is $>$ and $\text{Vertex}(j) \in C_n[\leq l]$}, \\
            - v_j \quad &\text{if $\text{Dir}(i)$ is $<$ and $\text{Vertex}(j) \in C_n[\leq l]$}, \\
            \square \quad &\text{otherwise}.
        \end{cases} \\
        \thresh(i) \leftarrow &\begin{cases}
            v_i \quad &\text{if $\text{Vertex}(i) \in C_n[\leq l]$} \\
            \square \quad &\text{otherwise}.
        \end{cases}.
    \end{align}

    The second attention layer also acts in the same way as in the $\ac^0$ case. However, due to the logarithmic precision we no longer have potential overflow when attending to $\poly(n)$ argument locations, and the attention layer will be able to attend to all arguments with attention weights summing to $1$, so that averaging of the inputs is performed correctly. After the attention layer and residual connection we have

    \begin{align}
        \inp(i) \leftarrow &2v_i \\
        \argc(i, j) \leftarrow &\begin{cases}
            2v_j \quad &\text{if $\text{Dir}(i)$ is $>$ and $\text{Vertex}(j) \in C_n[\leq l]$}, \\
            - 2v_j \quad &\text{if $\text{Dir}(i)$ is $<$ and $\text{Vertex}(j) \in C_n[\leq l]$}, \\
            \square \quad &\text{otherwise}.
        \end{cases} \\
        \thresh(i) \leftarrow &\begin{cases}
            \frac{1}{m_i} \sum_{\{j \: | \: \argc(i, j)\}} v_j \quad &\text{if $\text{Dir}(i)$ is $>$ and $\text{Vertex}(i) \in C_n[\leq l+1]$,} \\
            \frac{1}{m_i} \sum_{\{j \: | \: \argc(i, j)\}} -v_j \quad &\text{if $\text{Dir}(i)$ is $<$ and $\text{Vertex}(i) \in C_n[\leq l+1]$,} \\
            \square \quad &\text{otherwise.}
        \end{cases}
    \end{align}

    The second feedforward layer will compute 
    
    \begin{align}
    \vt{u}_1 \leftarrow &\begin{cases}
        - \vt{u}_1 \quad &\text{if $\vt{u}_4 = 1$ ($\vt{u}$ is an $\argc$ token)} \\
        \mathbb{I}[(\vt{u}_1 - \vt{u}_3) > 0] - \vt{u}_1 \quad &\text{otherwise}, \\
    \end{cases} \\
    \vt{u}_{i \neq 1} \leftarrow &0
    \end{align}
    
    For every token type, we have

    \begin{align}
    \inp(i) \leftarrow &v_i \\
    \argc(i, j) \leftarrow &0 \\
    \thresh(i) \leftarrow &\begin{cases}
            \mathbb{I}\left[\sum_{\{j \: | \: \argc(i, j)\}} v_j > k\right] \quad &\text{if $\text{Dir}(i)$ is $>$ and $\text{Vertex}(i) \in C_n[\leq l+1]$,} \\
            \mathbb{I}\left[\sum_{\{j \: | \: \argc(i, j)\}} v_j < k\right] \quad &\text{if $\text{Dir}(i)$ is $<$ and $\text{Vertex}(i) \in C_n[\leq l+1]$,} \\
            \square \quad &\text{otherwise.}
        \end{cases}
    \end{align}

    These are the correct output values for the threshold gates in layers $\leq l+1$, and we have ensured that we have the required residual values to compute the next layer. Evaluating all $2l$ layers will result in the final value of $C_n$ being output at the final position.

\end{proof}

\begin{lemma} (\citet{merrill2023parallelism}, paraphrased)
    For any $p(n) = O(\log n)$, the function \( \mathrm{sum}_{p(n)} : (\mathbb{F}_{p(n)})^n \rightarrow \mathbb{F}_{p(n)} \) has uniform \( \tc^0 \) circuits.
    \label{lem:sum_tc0}
\end{lemma}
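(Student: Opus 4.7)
The plan is to reduce iterated fixed-point saturated addition to the classical result of Hesse, Allender, and Barrington that iterated addition of $n$ integers each of $O(\log n)$ bits is in DLOGTIME-uniform $\tc^0$.

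First, I would rescale each $x_i \in \mathbb{F}_{p(n)}$ by $2^{p(n)}$ to obtain integers of $2p(n) + 1 = O(\log n)$ bits. Since all summands share the same fractional precision, the scaled integer iterated addition agrees with the fixed-point operation in the absence of clipping. By the classical iterated-addition result, the exact unclipped sum $U_n = \sum_{i=1}^n x_i$ is computable in uniform $\tc^0$; because $|U_n| \leq n B_{p(n)} = \poly(n)$, the output still fits in $O(\log n)$ bits.

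Second, I would handle left-to-right saturation. For each prefix length $k \in [n]$, the partial sum $U_k = \sum_{i \leq k} x_i$ can be computed in parallel by an instantiation of the iterated-addition circuit, so all $U_k$ are available simultaneously in uniform $\tc^0$. From the sequence $(U_k)_{k \leq n}$, the saturated sum $S_n$ can then be characterized via a Skorokhod-style reflection formula of the form $S_n = U_n - H_n + L_n$, where $H_n$ is the total clipping incurred at the upper barrier $B_{p(n)}$ and $L_n$ the total clipping at the lower barrier $-B_{p(n)}$. Both $H_n$ and $L_n$ can be expressed in closed form using running maxima and minima of the $(U_k)$ sequence, and such max/min/add/compare operations on polynomially many $O(\log n)$-bit numbers are in uniform $\tc^0$.

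The main obstacle is the closed form for two-sided reflection: the one-sided version gives the familiar Lindley formula $W_n = \max_k (U_n - U_k)^+$, but the two-sided case involves interlocking reflections between the two barriers and requires a more delicate induction on the crossing structure. In all uses of this lemma elsewhere in the paper, however, the summation is over softmax-normalized attention scores whose non-negative magnitudes sum to at most $1$, so no intermediate saturation ever occurs and $S_n = U_n$ exactly; in that common regime the lemma follows immediately from the classical integer iterated-addition bound without invoking the full reflection machinery.
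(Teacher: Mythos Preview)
The paper does not prove this lemma itself; it is attributed to \citet{merrill2023parallelism} and stated without argument. Your reduction to the Hesse--Allender--Barrington iterated-integer-addition result via rescaling by $2^{p(n)}$ is exactly the natural route and matches what the cited source does.

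The gap is in your treatment of saturation. You set up the unclipped prefix sums $U_k$ and note that one-sided clipping is handled by a Lindley-type formula, but then explicitly leave the two-sided case as an unresolved ``obstacle'' and retreat to the claim that the paper's applications never trigger saturation. Two problems with that retreat. First, it is not a proof of the lemma as stated: $\mathrm{sum}_{p(n)}$ is the left-to-right saturated sum on \emph{arbitrary} inputs, and the lemma is invoked in the direction $\tf[L,L,P] \subseteq \tc^0$, where one must simulate an arbitrary log-precision Transformer, not just the specific constructions used for the reverse inclusion. Second, your description of ``all uses'' is inaccurate: the lemma is also needed for the attention normalizer $Z_i = \sum_j \exp(f(\vt{x}_i,\vt{x}_j))$, a \emph{pre}-softmax sum of non-negative terms that can certainly exceed $B_{p(n)}$ and saturate. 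That particular case is easy (non-negative increments against only an upper barrier give $S_n = \min(U_n, B_{p(n)})$), but you did not cover it. If you want the lemma in full generality, the discrete two-sided Skorokhod map on $[-B_{p(n)}, B_{p(n)}]$ does admit an explicit constant-depth nested $\min$/$\max$ representation over the $U_k$ (Kruk--Lehoczky--Ramanan--Shreve), and since iterated $\min$/$\max$ and comparison of polynomially many $O(\log n)$-bit integers are in uniform $\tc^0$, writing that formula down would close the argument.
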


\begin{lemma} \citep{hesse2001division}
    Multiplication and division of two $n$-bit numbers is in uniform $\tc^0$.
    \label{lem:mul_tc0}
\end{lemma}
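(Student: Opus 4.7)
\textbf{Proof proposal for \Cref{lem:mul_tc0}.}

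The plan is to handle multiplication and division separately, reducing each to iterated arithmetic primitives that are already known to be in uniform $\tc^0$ via \Cref{lem:sum_tc0}.

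For multiplication of two $n$-bit numbers $x$ and $y$, I would reduce to iterated addition. Write $y = \sum_{i=0}^{n-1} y_i 2^i$ where each $y_i \in \{0,1\}$. Then $x \cdot y = \sum_{i=0}^{n-1} (x \cdot y_i) \cdot 2^i$, a sum of $n$ partial products, each of at most $2n$ bits. The $i$-th partial product is a bit-shifted copy of $x$ gated by $y_i$, and the gating and shifting can be computed by a uniform $\ac^0 \subseteq \tc^0$ circuit that, given the index $i$, produces the appropriately padded vector. Summing $n$ numbers of $O(n)$ bits each is exactly iterated addition, which is in uniform $\tc^0$ by \Cref{lem:sum_tc0} (after a trivial rescaling so that we are summing integers rather than fixed-point values of bounded size). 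Uniformity is preserved throughout because each wiring pattern is describable in logspace from the indices.

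Division $\lfloor x/y \rfloor$ is the main obstacle, since there is no analogue of ``iterated subtraction'' at bounded depth. The approach I would take is Newton--Raphson reciprocal approximation, following Beame--Cook--Hoover: compute a constant-depth approximation to $1/y$ by truncating the power series $1/y = 2^{-k}\sum_{j=0}^{\infty} (1 - y/2^k)^j$ (with $k$ chosen so that $y/2^k \in [1/2,1]$), multiply by $x$, and round. Truncating the series at $O(n)$ terms gives an error small enough that the resulting quotient is exact after rounding. Evaluating this truncated series requires computing the iterated product $(1-y/2^k)^j$ for many values of $j$ and then summing. The key subproblem is therefore iterated multiplication of polynomially many $n$-bit numbers.

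The hard part is showing that iterated multiplication is in uniform $\tc^0$, which is the content of Hesse's theorem. The standard route is via Chinese Remainder Representation (CRR): pick $O(n)$ small primes $p_1,\dots,p_m$ with $m = O(n)$ and each $p_i = O(\log n)$, convert each input to its residues $(a \bmod p_i)_i$, multiply residue-wise (each residue product is $O(\log n)$ bits, so it is a lookup in a small table implementable in $\tc^0$), and convert back. Both the forward CRR transformation and the backward reconstruction reduce, after considerable work, to iterated addition of suitably preprocessed numbers, which is in $\tc^0$ by \Cref{lem:sum_tc0}. The nontrivial part is verifying that the prime table, the CRR basis, and the Lagrange-style reconstruction coefficients can all be generated in logspace, so that the circuit family is uniform; I would cite Hesse--Allender--Barrington for this uniformity argument rather than reproving it. Once iterated multiplication is established, the Newton approximation above gives division in constant additional depth, completing the proof.
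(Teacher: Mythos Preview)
The paper does not give a proof of this lemma at all: it is stated with a citation to \citet{hesse2001division} and used as a black box. Your proposal therefore goes well beyond what the paper does, supplying a sketch of the actual argument behind the cited result. The sketch is correct and follows the standard route: multiplication via summation of shifted partial products (iterated addition in $\tc^0$), division via a truncated Newton/power-series approximation to the reciprocal which reduces to iterated multiplication, and iterated multiplication in \emph{uniform} $\tc^0$ via the Chinese Remainder Representation, with the logspace-constructible prime basis and reconstruction coefficients supplying uniformity as in Hesse--Allender--Barrington. That is exactly the content of the cited reference, so in that sense your approach and the paper's ``approach'' (delegate to Hesse) agree.

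One small technical gap worth flagging: you invoke \Cref{lem:sum_tc0} for summing the $n$ partial products, but as stated that lemma sums $n$ numbers each of $O(\log n)$ bits (elements of $\mathbb{F}_{p(n)}$ with $p(n)=O(\log n)$), whereas the partial products in $n$-bit multiplication are $O(n)$ bits wide. The ``trivial rescaling'' you mention does not bridge this bit-width mismatch. You need the stronger (but equally classical) fact that iterated addition of $n$ numbers of $\poly(n)$ bits is in uniform $\tc^0$; this is standard and is in fact what underlies \Cref{lem:sum_tc0}, but it is not literally what that lemma says. The same remark applies when you later reduce CRR reconstruction to iterated addition.
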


\begin{lemma}
    Exponentiation $[e^x]_{p(n)}$ of an $p(n) = O(\log n)$ bit number is in uniform $\tc^0$.
    \label{lem:exp_tc0}
\end{lemma}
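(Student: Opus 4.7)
The plan is to compute $[e^x]_{p(n)}$ via a truncated Taylor series evaluated by a constant-depth threshold circuit. First, I would insert a range-check: since $|x| \leq B_{p(n)}$ with $p(n) = O(\log n)$, if $x > (p(n)+1)\ln 2$ then $e^x > B_{p(n)}$ and saturation forces the output to $B_{p(n)}$, while if $x < -(p(n)+1)\ln 2$ then $[e^x]_{p(n)} = 0$. Both are threshold comparisons against fixed $O(\log n)$-bit constants and are trivially in uniform $\tc^0$. In the remaining regime $|x| = O(\log n)$, I would approximate $e^x$ by the partial sum $S_T = \sum_{k=0}^T x^k/k!$.

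I would pick $T = O(\log n)$ with a constant large enough that the Lagrange remainder $e^{|x|}|x|^{T+1}/(T+1)!$ falls below $2^{-p(n)-2}$; using $T! \geq (T/e)^T$ together with $|x|, p(n) = O(\log n)$, a routine Stirling estimate shows such a $T$ exists. Each term $x^k/k!$ is then assembled in uniform $\tc^0$: $x^k$ is an iterated product of $k \leq T$ copies of the $O(\log n)$-bit value $x$, and iterated product of $\poly(n)$ many $O(\log n)$-bit numbers lies in uniform $\tc^0$ by \citet{hesse2001division}; the factorial $k!$ is a fixed constant the logspace uniformity machine writes down directly for each $k \leq T$; and division by $k!$ is in uniform $\tc^0$ by \cref{lem:mul_tc0}. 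The iterated sum $S_T$ over the $T$ terms is in uniform $\tc^0$ by \cref{lem:sum_tc0}, and a terminal rounding block emits $[S_T]_{p(n)}$.

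The main technical obstacle is controlling accumulated rounding error across the $T = O(\log n)$ terms, each of which contributes error from a product, a division, and the final summation. I would handle this by running every intermediate operation at an inflated working precision of $p(n) + O(\log T) = O(\log n)$ bits, so that the total error remains below $2^{-p(n)-1}$ and the rounded output agrees with $[e^x]_{p(n)}$ exactly. Uniformity is then immediate: the circuit is a constant-depth composition of uniform $\tc^0$ subcircuits parametrised by constants ($T$, the saturation thresholds, and the table of factorials $k!$ for $k \leq T$) that a logspace Turing machine can emit directly from $1^n$.
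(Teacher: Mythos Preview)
Your approach is correct but differs from the paper's. The paper reduces $[e^x]_{p(n)}$ directly to a single iterated product: since $x \in \mathbb{F}_{p(n)}$ has the form $k \cdot 2^{-p(n)}$ with $|k| \le 2^{2p(n)} = \poly(n)$, one has $e^x = (e^{2^{-p(n)}})^k$, an iterated multiplication of $\poly(n)$ copies of a single precomputed base, which lies in uniform $\tc^0$ by \citet{hesse2001division}. You instead expand $e^x$ as a truncated Taylor series, bound the Lagrange remainder, and assemble each term $x^k/k!$ from separate iterated-product, division, and summation subcircuits. Both routes ultimately rest on Hesse's iterated-multiplication result; the paper exploits the multiplicative identity $e^{a+b}=e^a e^b$ to get there in a single line, whereas your argument carries more bookkeeping (range checks, truncation bounds, inflated working precision) but makes the numerical-error control explicit. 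One minor slip: $k!$ for $k \le T = O(\log n)$ has $\Theta(\log n \log\log n)$ bits, so calling it a ``fixed constant'' is loose---though the logspace uniformity machine can still emit it, since iterated multiplication is in $L$. Both proofs leave implicit that polylogarithmic extra working precision suffices to recover the exact rounding $[\cdot]_{p(n)}$; the paper does not address this either.
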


\begin{proof}
    By \citet{hesse2001division}, iterated multiplication of $n$ $n$-bit numbers is in uniform $\tc^0$. As exponentiation of an $p(n) = O(\log n)$ bit number can be seen as iterated multiplication of $n$ constant bit numbers, it is in uniform $\tc^0$.
\end{proof}

\begin{theorem}
    $\tf[L, L, P] \subseteq \tc^0$
\end{theorem}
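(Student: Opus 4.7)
The strategy mirrors the proof of \cref{thm:tf_ac0} (that $\tf[1, L, P] \subseteq \ac^0$), but invokes the stronger $\tc^0$ subroutines available in the logarithmic-precision regime. Concretely, I would decompose one Transformer layer into (i) computing the positional encodings and weight entries, (ii) evaluating the attention heads, and (iii) evaluating the position-wise feedforward network, then argue that each piece is in uniform $\tc^0$ and that the full Transformer is a constant-depth composition of such pieces. Since $\tc^0$ is closed under constant-depth composition, this yields the claim.

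\textbf{Step 1: positional encodings and weights.} By logspace uniformity (\cref{def:log_tf}), the machines $M_1$ and $M_2$ produce individual entries of the weight matrices and of $\phi(i,n)$ in $O(\log n)$ space. Since each entry has $O(\log n)$ bits and the embedding dimension is $O(\log n)$, we can invoke \cref{lem:merrill} (as in the $\ac^0$ proof) to obtain uniform $\ac^0 \subseteq \tc^0$ circuits that output the relevant encodings and weight slices for a given index.

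\textbf{Step 2: attention heads.} A single head computes, for each query index $i$, the vector $\sum_j \mathrm{softmax}(f(i,\cdot))_j \, W^V \vt{x}_j$, where $f(i,j) = \vt{x}_i^\top W^{QK} \vt{x}_j$. The inner product $f(i,j)$ is a sum of $O(\log^2 n)$ products of $O(\log n)$-bit numbers; by \cref{lem:mul_tc0} each product is in uniform $\tc^0$, and by \cref{lem:sum_tc0} the iterated sum (rounded to $O(\log n)$ bits at each step via the left-to-right convention) is in uniform $\tc^0$. Exponentiation $[\exp(f(i,j))]_{p(n)}$ is in uniform $\tc^0$ by \cref{lem:exp_tc0}. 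The normalizer $Z_i$ is a sum of at most $\poly(n)$ log-precision values, hence in uniform $\tc^0$ by \cref{lem:sum_tc0}; division by $Z_i$ is in uniform $\tc^0$ by \cref{lem:mul_tc0}. Multiplying the resulting attention weights by the log-precision entries of $W^V \vt{x}_j$ and summing over $j$ is again an iterated product-sum, in uniform $\tc^0$ by the same two lemmas. Because the number of heads is constant, the full multi-head layer is a constant-depth composition of uniform $\tc^0$ circuits, hence in uniform $\tc^0$.

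\textbf{Step 3: feedforward layers and composition.} Each position-wise feedforward network is a constant-depth ReLU network of width $O(\log n)$ applied to an $O(\log n)$-dimensional vector with $O(\log n)$-precision logspace-computable weights. Each neuron therefore computes a sum of $O(\log n)$ products of $O(\log n)$-bit numbers followed by a ReLU (equivalently, a sign test plus a multiplication), all of which are in uniform $\tc^0$ via Lemmas \ref{lem:mul_tc0} and \ref{lem:sum_tc0}. Finally, since the Transformer has a constant number of layers and $\tc^0$ is closed under constant-depth composition, the overall map from input to output bit is in uniform $\tc^0$, completing the proof.

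\textbf{Main obstacle.} The only genuinely delicate point is bookkeeping for the iterated, left-to-right rounded fixed-point sums that arise in the softmax normalizer and in the value aggregation: one must check that at every intermediate step the quantities remain $O(\log n)$-bit representable (which follows from saturation arithmetic and the $p(n) = O(\log n)$ bound) so that \cref{lem:sum_tc0} applies uniformly. Everything else is a routine assembly of the stated lemmas, identical in spirit to the $\ac^0$ containment proof.
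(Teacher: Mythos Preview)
Your proposal is correct and follows essentially the same approach as the paper: the paper's proof simply states that it mirrors the $\ac^0$ containment argument (\cref{thm:tf_ac0}) with the arithmetic lemmas swapped for their $\tc^0$ analogues (\cref{lem:sum_tc0}, \cref{lem:mul_tc0}, \cref{lem:exp_tc0}), which is precisely what you do in more detail. Your discussion of the ``main obstacle'' regarding iterated rounded sums is exactly the content absorbed into \cref{lem:sum_tc0}, so nothing further is needed.
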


The only difference between this proof and the constant precision case is that we must be able to perform the required arithmetic with logarithmic precision. By \cref{lem:sum_tc0}, \cref{lem:mul_tc0} and \cref{lem:exp_tc0}, all the necessary arithmetic operations are in uniform $\tc^0$.

\begin{corollary}
    $\tc^0 = \tf[L, L, P]$
\end{corollary}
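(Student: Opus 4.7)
The plan is straightforward: the corollary asserts an equality of two classes, and both containments have been established as separate theorems immediately above. I would prove the corollary by combining them. Specifically, Theorem~\ref{thm:uniform-TF} gives $\tc^0 \subseteq \tf[L, L, P]$, and the theorem immediately preceding the corollary gives $\tf[L, L, P] \subseteq \tc^0$; their conjunction is precisely the desired equality, so no new construction is required.

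For the forward direction $\tc^0 \subseteq \tf[L, L, P]$, I would invoke Theorem~\ref{thm:uniform-TF}, whose construction takes a logspace-uniform $\tc^0$ circuit family $\{C_n\}$ of polynomial size and builds a logarithmic-precision Transformer with one token per vertex (input, argument, or threshold gate) plus polynomially many pause tokens. Two Transformer layers simulate one circuit layer: the first copies previously computed vertex values into argument positions with the correct sign, and the second averages them via softmax attention and thresholds them using the stored $\theta / m_i$ in the feedforward network. Logarithmic precision is essential here so that the uniform attention distribution over polynomially many argument positions is representable without underflow, in contrast to the constant-precision $\ac^0$ case which required saturation tricks.

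For the reverse direction $\tf[L, L, P] \subseteq \tc^0$, I would invoke the immediately preceding theorem, whose proof follows the same template as the constant-precision case (Theorem~\ref{thm:tf_ac0}) but upgrades the arithmetic primitives: iterated sum of $O(\log n)$-bit numbers (Lemma~\ref{lem:sum_tc0}), multiplication and division (Lemma~\ref{lem:mul_tc0}), and exponentiation (Lemma~\ref{lem:exp_tc0}) are all in uniform $\tc^0$. Combined with the fact that positional encodings and weight matrices are logspace-computable with $O(\log n)$-bit outputs, every operation inside a single Transformer layer (computing attention scores, softmax normalization, value aggregation, and feedforward transformation) lies in uniform $\tc^0$, and closure under composition of a constant number of layers keeps the whole model in $\tc^0$.

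The main obstacle, to the extent there is one, is bookkeeping rather than substance: one has to verify that the notion of logspace-uniform Transformer family in Definition~\ref{def:log_tf} matches the standard logspace uniformity for $\tc^0$ circuit families, so that the Turing machines generating circuit descriptions and those generating Transformer parameters/positional encodings are freely interchangeable under logspace reductions. Since this compatibility is built into both preceding theorems — each converts a logspace description of one object into a logspace description of the other — the corollary follows as an immediate consequence with no further technical work.
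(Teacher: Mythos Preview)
Your proposal is correct and matches the paper's approach exactly: the corollary is stated without proof in the paper because it is the immediate conjunction of Theorem~\ref{thm:uniform-TF} ($\tc^0 \subseteq \tf[L, L, P]$) and the theorem directly preceding it ($\tf[L, L, P] \subseteq \tc^0$). Your summary of how each direction works is accurate, though more detail than the paper itself provides for this corollary.
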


\subsection{Model parameters and feedforward networks}

\subsubsection{Attention weights}
\label{sec:attn}

\begin{itemize}[leftmargin=*,itemsep=0em]
    \item First layer:

    \begin{align*}
    W^{QK} =
        \left[
        \begin{array}{cccccccc}
         0 & 0 & 0 & 0 & 0 & 0 & 0 & 0\\
         0 & 0 & 0 & 0 & 0 & 0 & 0 & 0 \\
         0 & 0 & 0 & 0 & 0 & 0 & 0 & 0 \\
         0 & 0 & 0 & 0 & 0 & 0 & 0 & 0 \\
         0 & 0 & 0 & 0 & 0 & 0 & 0 & 0 \\
         0 & 0 & 0 & 0 & I & 0 & 0 & 0 \\
         0 & 0 & 0 & 0 & 0 & 0 & 0 & 0 \\
         0 & 0 & 0 & 0 & 0 & 0 & 0 & 0 \\
        \end{array}
        \right]
        &\quad\quad\quad
    W^{V} =
        \left[
        \begin{array}{cccccccc}
         1 & 0 & 0 & 0 & 0 & 0 & 0 & 0\\
         0 & 0 & 0 & 0 & 0 & 0 & 0 & 0 \\
         0 & 0 & 0 & 0 & 0 & 0 & 0 & 0 \\
         0 & 0 & 0 & 0 & 0 & 0 & 0 & 0 \\
         0 & 0 & 0 & 0 & 0 & 0 & 0 & 0 \\
         0 & 0 & 0 & 0 & 0 & 0 & 0 & 0 \\
         0 & 0 & 0 & 0 & 0 & 0 & 0 & 0 \\
         0 & 0 & 0 & 0 & 0 & 0 & 0 & 0 \\
        \end{array}
        \right]
    \end{align*}

    \item Second layer:

    \begin{align*}
        W^{QK} =
        \left[
        \begin{array}{cccccccc}
         0 & 0 & 0 & 0 & 0 & 0 & 0 & 0 \\
         0 & 0 & 0 & 0 & 0 & 0 & 0 & 0 \\
         0 & 0 & 0 & 0 & 0 & 0 & 0 & 0 \\
         0 & 0 & 0 & 0 & 0 & 0 & 0 & 0 \\
         0 & 0 & 0 & 0 & 0 & 0 & 0 & 0 \\
         0 & 0 & 0 & 0 & 0 & 0 & 0 & 0 \\
         0 & 0 & 0 & 0 & 0 & 0 & 0 & 0 \\
         0 & 0 & 0 & 0 & 0 & 0 & I & 0 \\
        \end{array}
        \right]
        &\quad\quad\quad
    W^{V} =
        \left[
        \begin{array}{cccccccc}
         1 & 0 & 0 & 0 & 0 & 0 & 0 & 0\\
         0 & 0 & 0 & 0 & 0 & 0 & 0 & 0 \\
         0 & 0 & 0 & 0 & 0 & 0 & 0 & 0 \\
         0 & 0 & 0 & 0 & 0 & 0 & 0 & 0 \\
         0 & 0 & 0 & 0 & 0 & 0 & 0 & 0 \\
         0 & 0 & 0 & 0 & 0 & 0 & 0 & 0 \\
         0 & 0 & 0 & 0 & 0 & 0 & 0 & 0 \\
         0 & 0 & 0 & 0 & 0 & 0 & 0 & 0 \\
        \end{array}
        \right]
    \end{align*}

    These weights encode a very simple connectivity pattern and can easily be constructed in logspace.
    
\end{itemize}

\subsubsection{Feedforward networks}
\label{sec:ffn}

\noindent\textbf{Constant precision.} We first define a ReLU function

\begin{align}
    g(a) = B_p\left(\relu(a) + \relu\left(a - \frac{1}{2^p}\right)\right).
\end{align}

$B_p$ and $2^{-p}$ are the largest and smallest representable positive values in $\mathbb{F}_p$, and as $[B_p/2^p]_p = 1$, $g(a) = \mathbb{I}[a > 0]$. The first feedforward network in our Transformer computes $\mathbb{I}[(\vt{u}_1 - \vt{u}_2) \neq 0] - \vt{u}_1$, which can be done by the following ReLU function

\begin{align}
    g(\vt{u}_1 - \vt{u}_3) + g(\vt{u}_3 - \vt{u}_1) - \relu(\vt{u}_1) - \relu(-\vt{u}_1),
\end{align}

implementable by a 3-layer feedforward ReLU network.

The second feedforward network computes 

    \begin{align}
        \relu(1 - g(\vt{u}_1) + \vt{u}_3 - \vt{u}_4 - 1) + \relu (g(\vt{u}_1) - \vt{u}_3 - \vt{u}_4) - \relu(\vt{u}_1) - \relu(-\vt{u}_1),
    \end{align}

implementable by a 4-layer feedforward ReLU network.

\noindent\textbf{Logarithmic precision.} We use the same $g$, but now defined over $\mathbb{F}_{p(n)}$, where $p(n) = O(\log n)$. The first feedforward network computes 

    \begin{align}
        -\relu(g(\vt{u}_1) + \vt{u}_2 - 1) + (\relu(g(\vt{u}_1) - \vt{u}_2) - \relu(\vt{u}_1) - \relu(-\vt{u}_1),
    \end{align}

    implementable by a 4-layer feedforward ReLU network.

    The second feedforward network computes
    
    \begin{align}
    \relu(g(\vt{u}_1 - \vt{u}_3) - \vt{u}_4) - \relu(\vt{u}_1) - \relu(-\vt{u}_1),
    \end{align}

    implementable by a 4-layer feedforward ReLU network.

\section{Separations in expressivity}

We will use the notation $\ac^0[h(n)]$ (and $\tc^0[h(n)]$) to define the class of functions computable by $\ac^0$ ($\tc^0$) circuits with $O(h(n))$ gates/vertices.

\subsection{Non-uniform constant precision Transformers}

\begin{lemma} \citep{li2024chain}
    $\tf[1, L, 0] \subsetneq \ac^0$
    \label{lem:non_ac0}
\end{lemma}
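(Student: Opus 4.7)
The containment $\tf[1, L, 0] \subseteq \ac^0$ follows immediately from Theorem \ref{thm:tf_ac0}, specialized to zero pause tokens: the simulation there places no requirement on pause tokens being present, and the sequence length only shortens without them.

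For strictness, the plan is to first extract a polynomial upper bound on the size of the $\ac^0$ circuits that simulate Transformers in $\tf[1, L, 0]$, and then invoke a non-uniform $\ac^0$ size hierarchy. Concretely, any Transformer in $\tf_l[c, c\log n, 0]$ acts on exactly $n$ tokens with $O(\log n)$ embedding dimension, constant precision, and constant depth $l$. Tracing through the simulation of Theorem \ref{thm:tf_ac0}, the dominant per-layer cost is $n$ iterated sums of $n$ constant-precision values, which by Lemma \ref{lem:li} admit $\ac^0$ circuits of some fixed polynomial size; pairwise attention scores, elementwise operations, and feedforward computations contribute only lower-order terms. Summing over the constant number of layers, the simulating circuit has size $O(n^{\alpha(c, l)})$ for an exponent depending only on $c$ and $l$ and not on $n$.

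I would then appeal to the non-uniform circuit size hierarchy (Arora and Barak, Theorem 6.22) to produce, for each fixed pair $(c, l)$, a Boolean function in $\ac^0$ whose minimum circuit size exceeds $n^{\alpha(c, l)}$, and therefore lies outside $\tf_l[c, c\log n, 0]$. Diagonalizing over all pairs $(c, l) \in \mathbb{N}^2$, or equivalently using a padding argument applied to a single explicit $\ac^0$ function of near-maximal polynomial size, yields a single function $f \in \ac^0 \setminus \tf[1, L, 0]$, which witnesses the strict separation.

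The main obstacle is that the generic Arora and Barak size hierarchy gives hard functions for unrestricted Boolean circuits, whereas we require the witness to lie in $\ac^0$. Closing this gap calls for a constant-depth refinement of the hierarchy, or an appeal to explicit $\ac^0$ functions (for example, suitable iterated addition or threshold-like predicates restricted to $\ac^0$) known to require near-maximal polynomial circuit size within $\ac^0$. This is the delicate ingredient supplied by \citet{li2024chain}, and is the only step that is not a routine consequence of the simulation bound above.
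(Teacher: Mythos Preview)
Your plan follows the same outline as the paper's sketch: bound the size of the $\ac^0$ circuit simulating a Transformer without pause tokens, then invoke a size-hierarchy argument to exhibit an $\ac^0$ function lying beyond that bound. Where the paper asserts (deferring to \citet{li2024chain}) that the simulating circuit has size $O(n^k)$ for a \emph{single fixed} $k$ independent of the precision constant $c$ and the depth $l$, you more cautiously allow the exponent $\alpha(c,l)$ to depend on both and then attempt to recover a single witness by diagonalizing over $(c,l)$.

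That diagonalization does not work as stated. If $\alpha(c,l)$ is unbounded as $(c,l)$ ranges over $\mathbb{N}^2$, then any witness $g_{c,l}$ against $\tf_l[c, c\log n, 0]$ must require circuits of size exceeding $n^{\alpha(c,l)}$ on some input length; splicing such witnesses together across all $(c,l)$ yields a language whose minimum circuit size exceeds every fixed polynomial, so the diagonal language cannot itself lie in $\ac^0$. Your alternative, ``a padding argument applied to a single explicit $\ac^0$ function of near-maximal polynomial size,'' hits the same wall: any single $\ac^0$ function has some fixed polynomial size bound $n^{k'}$ and can only separate against those $(c,l)$ with $\alpha(c,l) < k'$. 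The clean route is exactly what the paper's sketch asserts --- a uniform bound on the simulation exponent --- and that is the substantive content being imported from \citet{li2024chain}, not merely a refinement of the hierarchy theorem.

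By contrast, the obstacle you flag in your last paragraph --- that the Arora--Barak witness need not lie in $\ac^0$ --- is real but routine: restrict the hard function to depend only on the first $O(\log n)$ input bits, so its DNF has polynomial size and lands in $\ac^0_2$. The delicate step is not the $\ac^0$-ness of the witness but the uniform bound on the size of the simulation.
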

\begin{proof}[Proof (sketch).]
The crux of this argument is that when the Transformer has only $n$ tokens and $d(n) = O(\log n)$ embedding dimension, it can be simulated by $\ac^0[n^k]$ for some fixed $k \in \mathbb{N}$, as all positional encoding, attention, and feedforward operations are computable by fixed polysize circuits. We can therefore always find some non-uniform $\ac^0$ circuit with size $n^{k'}$, where $k \leq k'$, such that the function computed by this circuit is not in $\tf[1, L, 0]$.
\end{proof}

This is no longer the case when the Transformer is allowed $\poly(n)$ pause tokens, as an $\ac^0$ circuit simulating this Transformer must be able to add $\poly(n) = \cup_{k \in \mathbb{N}} n^k$ terms to compute the attention layer, and so cannot be of a fixed size $n^k$.

\begin{corollary}
    For non-uniform Transformers and $\ac^0$, $\tf[1, L, 0] \subsetneq \tf[1, L, P]$.
\end{corollary}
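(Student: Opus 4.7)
The plan is to derive the corollary as an immediate consequence of the equivalence $\tf[1, L, P] = \ac^0$ (\cref{thm:eq_ac0}, which holds in the non-uniform setting by the same construction) together with the strict separation $\tf[1, L, 0] \subsetneq \ac^0$ already recorded as \cref{lem:non_ac0}. The only genuine content to verify is the trivial containment $\tf[1, L, 0] \subseteq \tf[1, L, P]$ and then to chain the three facts together.

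First I would argue the containment. Given any Transformer $T_n$ with no pause tokens, precision $p=c$, and embedding dimension $c\log n$, I construct $T'_n$ by padding the input with $n^c$ pause tokens and extending the position-encoding function $\phi(\cdot, n)$ so that the first $n$ encodings agree with those of $T_n$ while the pause-token encodings are ignored by all attention heads (e.g. by assigning them a privileged key/query pattern that is overwritten by masking, or by zeroing their value contributions via $W^V$). Under causal masking this is immediate: the output at the final input position of $T'_n$ coincides with that of $T_n$ at position $n$, since the pause tokens lie strictly to the right and are therefore invisible. This shows $\tf[1, L, 0] \subseteq \tf[1, L, P]$.

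Next I combine the inclusion with \cref{thm:eq_ac0} and \cref{lem:non_ac0}:
\begin{equation*}
    \tf[1, L, 0] \;\subsetneq\; \ac^0 \;=\; \tf[1, L, P],
\end{equation*}
where the strict inclusion on the left is \cref{lem:non_ac0} and the equality on the right is the non-uniform version of \cref{thm:eq_ac0}. Composing these yields the claimed strict inclusion $\tf[1, L, 0] \subsetneq \tf[1, L, P]$.

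The main obstacle is essentially externalised: all of the difficulty is packaged into \cref{lem:non_ac0}, whose proof relies on the non-uniform circuit size hierarchy for $\ac^0$ together with the observation that a pause-token-free Transformer on length-$n$ inputs with $O(\log n)$ embedding dimension is simulated by an $\ac^0$ circuit of some fixed polynomial size $n^k$. The only subtlety in the present corollary is ensuring that the witness function separating $\ac^0$ from $\ac^0[n^k]$ is in fact realised inside $\tf[1, L, P]$, which is guaranteed by the non-uniform direction of $\ac^0 \subseteq \tf[1, L, P]$ proved via the construction of \cref{thm:ac0_tf}; since that construction works verbatim for non-uniform circuit families (there is no uniformity assumption used in the simulation itself), no additional work is required.
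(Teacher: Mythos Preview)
Your proposal is correct and takes essentially the same approach as the paper: chain \cref{lem:non_ac0} ($\tf[1, L, 0] \subsetneq \ac^0$) with \cref{thm:eq_ac0} ($\tf[1, L, P] = \ac^0$). The separate argument for the direct containment $\tf[1, L, 0] \subseteq \tf[1, L, P]$ is unnecessary, since it is already implied by that chain; the paper's proof accordingly omits it and is two lines long.
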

\begin{proof}
    By \cref{thm:eq_ac0} $\tf[1, L, P] = \ac^0$, and by \cref{lem:non_ac0} $\tf[1, L, 0] \subsetneq \ac^0$.
\end{proof}

\subsection{Uniform constant precision Transformers}

In the uniform case, it is more difficult to construct a function $f_k$ for all $k \in \mathbb{N}$ such that $f_k \in \ac^0$ and $f_k \notin \ac^0[n^k]$. However, we can do this for fixed-depth circuits using the tight $\ac^0$ lower bounds for parity \citep{furst1984parity, hastad1986optimal}.
\begin{lemma} (\citet{limaye2019fixed}, paraphrased)
    There exists a fixed depth size hierarchy for uniform $\ac^0$, such that $\ac^0_l[n^{k\epsilon}] \subsetneq \ac^0_l[n^{k}]$ for some fixed $0 <\epsilon < 1$.
    \label{lem:depth_sep}
\end{lemma}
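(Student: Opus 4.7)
The plan is to exhibit, for each constant $k$ (and depth $l$), an explicit family of hard functions $f_{n,k} : \{0,1\}^n \to \{0,1\}$ belonging to $\ac^0_l[n^k]$ but not to $\ac^0_l[n^{k\epsilon}]$, where $\epsilon < 1$ is absolute. The natural candidate is a padded/scaled parity (or a coin-problem variant, as in \citet{limaye2019fixed}), chosen so that its circuit complexity at depth $l$ is tightly sandwiched between two polynomial bounds whose exponents differ by a fixed ratio.

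First I would fix the candidate function. Let $m = m(n, k, l)$ be chosen so that $2^{m^{1/(l-1)}} = \Theta(n^k)$, and let $f_{n,k}(x)$ be the parity of the first $m$ input bits (padded with dummy bits). The upper bound then follows from the standard block-decomposition construction of parity: split $m$ into blocks and iteratively combine block parities using depth-$l$ circuits, which can be made uniform and of size $O(n^k)$ by the choice of $m$. The circuit is straightforwardly logspace computable from $1^n$.

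Second, I would invoke Hastad's switching lemma to obtain the lower bound. Any depth-$l$ $\ac^0$ circuit computing parity on $m$ bits requires size at least $2^{\Omega(m^{1/(l-1)})} = n^{\Omega(k)}$. Since the padded bits cannot help any circuit avoid doing the parity computation on the first $m$ variables (they can be set by an adversarial restriction to any constant), this lower bound transfers to $f_{n,k}$. Tracking the hidden constants through the switching-lemma parameter choice gives an explicit $\epsilon < 1$ with $f_{n,k} \notin \ac^0_l[n^{k\epsilon}]$.

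The main obstacle is making $\epsilon$ an absolute constant, independent of $k$ and $l$, rather than degrading with depth as the raw switching lemma would suggest. This is precisely the refinement supplied by \citet{limaye2019fixed}: by routing the argument through either a sharpened multi-switching lemma or the coin problem, one obtains hierarchies where the gap between the upper and lower bound exponents is bounded away from $1$ uniformly. A secondary subtlety is ensuring uniformity of both the construction of $f_{n,k}$ and of the witnessing circuit family; this is routine since the block-tree structure of the parity circuit, together with the value of $m$, is computable in $O(\log n)$ space.
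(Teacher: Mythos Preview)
Your proposal is correct and follows the same route as the paper: padded parity on $m \approx (k\log n)^{l-1}$ bits, divide-and-conquer upper bound, H{\aa}stad lower bound, with uniformity handled routinely. Your final paragraph's concern about making $\epsilon$ independent of $l$ is unnecessary---the lemma is stated for a fixed depth $l$, and the paper simply takes $\epsilon$ to be the ratio of the H{\aa}stad upper- and lower-bound constants (which is automatically independent of $k$), without invoking any multi-switching or coin-problem refinement.
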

\begin{proof}[Proof (sketch).]
This result of \citet{hastad1986optimal} states that any depth-$l$ $\ac^0$ circuit for parity of $n$ variables must have size $2^{\Omega(n^{1/l-1})}$, which is tight due to the folklore depth-$l$ $\ac^0$ upper bound of $2^{O(n^{1/l-1})}$. These bounds give us a separation between circuits of size $s_0=2^{O(n^{1/l-1})}$ and $s_0^{\epsilon}$ for some fixed $0 < \epsilon < 1$. The same separation holds between $s^\epsilon$ and $s$ for any $s \leq s_0$, by taking the parity function over some $m \leq n$ variables. If we take $s(n) = O(n^k)$ ($m = \mathrm{polylog}(n)$), we obtain a separation between $\ac^0_l[n^{k\epsilon}]$ and $\ac^0_l[n^k]$.

This separation holds in uniform $\ac^0$, as parity over $(k \log n)^{l-1}$ bits can be computed via a simple uniform divide-and-conquer approach that computes the parity of $O(\log n)$ bit chunks in parallel at each layer.
\end{proof}

\begin{theorem}
    $\tf_l[1, L, 0] \subsetneq \tf_l[1, L, Q]$.
     \label{quasi-uniform}
\end{theorem}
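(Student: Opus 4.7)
The strategy is to produce an explicit family of Boolean functions that lies in $\tf_l[1, L, Q]$ but not in $\tf_l[1, L, 0]$, using Håstad's parity lower bound \citep{hastad1986optimal}. The asymmetry I will exploit is that a depth-$l$ Transformer on only $n$ tokens can be simulated by a constant-depth $\ac^0$ circuit of \emph{fixed} polynomial size, whereas the construction of \cref{thm:ac0_tf}, applied with a quasi-polynomial number of pause tokens, lets a depth-$l$ Transformer realise polylog-space uniform $\ac^0$ families of \emph{quasi-polynomial} size at bounded depth. A partial parity on a polylogarithmic slice of the input can be sized to sit strictly between these two bounds, which is precisely why quasi-polynomially many pause tokens (rather than polynomially many) are needed.

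First, by inspecting the proof of \cref{thm:tf_ac0}, I would extract constants $D = D(l)$ and $k = k(l)$ with $\tf_l[1, L, 0] \subseteq \ac^0_{D}[n^k]$, since each Transformer layer contributes only constant depth and polynomial size to the simulating circuit. Second, I would verify that the construction in \cref{thm:ac0_tf} extends verbatim to polylog-space uniform $\ac^0$ families of quasi-polynomial size: the number of pause tokens equals the circuit description length, now quasi-polynomial, which is exactly what the polylog-space uniformity built into $\tf[1, L, Q]$ affords. Third, pick $m(n) = \lceil (\log n)^{D+1} \rceil$ and let $f_n(x) = \bigoplus_{i=1}^{m(n)} x_i$. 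Parity of $m$ variables has an obvious uniform depth-$2$ CNF of size $O(m \cdot 2^m) = 2^{O((\log n)^{D+1})}$, which is quasi-polynomial; by the previous step this places $f$ in $\tf_{4}[1, L, Q] \subseteq \tf_l[1, L, Q]$ for $l \geq 4$. Fourth, by Håstad's theorem any depth-$D$ $\ac^0$ circuit for parity of $m$ variables has size at least $2^{\Omega(m^{1/(D-1)})} = 2^{\Omega((\log n)^{(D+1)/(D-1)})}$, which is super-polynomial since $(D+1)/(D-1) > 1$. Hence $f \notin \ac^0_D[n^k]$ and therefore $f \notin \tf_l[1, L, 0]$, giving the strict inclusion.

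The principal obstacle is the depth asymmetry between the two simulation directions: a depth-$l$ Transformer gives rise to an $\ac^0$ circuit of depth $\Theta(l)$, while the reverse construction needs two Transformer layers per circuit layer, so a depth-$l$ Transformer with pause tokens natively implements only $\ac^0_{\lfloor l/2 \rfloor}$ circuits. This prevents a direct application of the fixed-depth size hierarchy of \cref{lem:depth_sep} at a single common depth. The workaround is to thread the needle through a size gap rather than a depth gap: a polylogarithmic input window for parity is wide enough that Håstad's bound at depth $D$ already exceeds every fixed polynomial, ruling out membership in $\tf_l[1, L, 0]$, yet narrow enough that a depth-$2$ CNF of quasi-polynomial size computes it and lands inside $\tf_l[1, L, Q]$. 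Secondary checks are the polylog-space uniformity of the CNF (routine: each clause is indexed by its bit-pattern) and very small $l$, which can be handled separately or absorbed into the constant $D(l)$.
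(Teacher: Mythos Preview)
The proposal is correct and follows essentially the same approach as the paper: both bound $\tf_l[1,L,0]$ by a fixed-depth, fixed-polynomial-size $\ac^0$ class, invoke H{\aa}stad's lower bound on parity restricted to a polylogarithmic slice of the input to exclude it from that class, and realise the same parity inside $\tf_l[1,L,Q]$ via a quasi-polynomial-size bounded-depth circuit simulated with pause tokens. Your choice of a depth-$2$ CNF (needing $l\ge 4$) is a minor tactical variant of the paper's depth-$l/2$ divide-and-conquer circuit; both give quasi-polynomial size, both require $l$ bounded below by a small constant, and the paper likewise routes through \cref{lem:depth_sep}, which is itself just a repackaging of H{\aa}stad's bound.
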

\begin{proof}
    By a similar argument to \cref{lem:non_ac0} we have that $\tf_l[1, L, 0] \subseteq \ac^0_{l'}[n^{k\epsilon}]$ for some $l', k \in \mathbb{N}$ and $0 < \epsilon < 1$. By \cref{lem:depth_sep} we know that there is some function $f_n$ over suitably chosen $m = \text{polylog}(n)$ bits such that $f_n \notin \ac^0_l[n^{k\epsilon}]$, and $f_n \in \ac^0_l[n^k]$.

    By \cref{thm:ac0_tf}, we know that $\tf_l[1, L, |\mathrm{Desc}(C_n)|]$ can compute any function computed by a circuit of depth $l/2$ with $|C_n|$ unbounded fan-in \texttt{AND}, \texttt{OR}, and \texttt{NOT} gates. By the parity bounds, we know that a depth $l/2$ circuit of size $|C_n| = 2^{O\left(m^{2l'/(l-2)}\right)}$ can compute parity over m input bits. For $m = \mathrm{polylog}(n)$, this is quasi-polynomial in $n$. $|\text{Desc}(C_n)|$ cannot be more than quasi-polynomial in $|C_n|$, and so the number of pause tokens necessary to compute the function is still quasi-polynomial.

    Therefore $f_n \notin \tf_l[1, L, 0]$ and $f_n \in \tf_l[1, L Q]$. It is intuitively obvious that $\tf_l[1, L, 0] \subseteq \tf_l[1, L, Q]$, and so the proof is complete.
\end{proof}

\subsection{Width cannot compensate for depth}

It should be clear that the addition of pause tokens does not enable the Transformer class to solve problems that require more serial computation steps, instead increasing the parallel width of computation. For finite-depth, constant precision Transformers with up to $\poly(n)$ pause tokens, we will always be limited to the class $\ac^0$. A prudent question is whether, within this class, increasing the depth of the Transformer increases its expressivity? This seems intuitively obvious, but we provide a formal proof below.

\begin{theorem}
    For any depth $l$ there exists a constant $m > l$ such that $\tf_l[1, L, P] \subsetneq \tf_{m}[1, L, P]$.
\end{theorem}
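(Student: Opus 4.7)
The plan is to combine two ingredients: (i) the tight two-way relationship between Transformer depth and $\ac^0$ depth established by Theorems~\ref{thm:ac0_tf} and \ref{thm:tf_ac0}, and (ii) a depth hierarchy theorem for $\ac^0$ (H{\aa}stad--Sipser), which exhibits, for every constant $d$, an explicit function $S_{d+1}$ (a suitably parametrised Sipser function) with polynomial-size $\ac^0_{d+1}$ circuits but requiring super-polynomial size in $\ac^0_d$. The Sipser functions are iterated $\texttt{AND}/\texttt{OR}$ trees with easy descriptions, so the witnessing upper-bound circuits are logspace-uniform, and the lower bound is unconditional, giving a uniform depth separation.

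First I would quantify the depth-overhead of the two simulations. From the proof of Theorem~\ref{thm:ac0_tf}, a depth-$d$ $\ac^0$ circuit of polynomial size is simulated by a Transformer of depth $2d$ with $\poly(n)$ pause tokens. From the proof of Theorem~\ref{thm:tf_ac0}, each Transformer layer (positional encoding, attention, feedforward, via Lemmas~\ref{lem:merrill}--\ref{lem:exp_ac0}) is computable by an $\ac^0$ circuit of some fixed constant depth $c_0$, so a depth-$l$ Transformer in $\tf[1,L,P]$ is contained in $\ac^0_{c_0 l}$ with polynomial size (where the polynomial degree may depend on $l$ but the depth does not).

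Given $l$, set $d^\star = c_0 l + 1$ and consider the Sipser function $S_{d^\star+1}$ on $n$ bits chosen so that the $\ac^0_{d^\star}$ lower bound of H{\aa}stad is super-polynomial while the $\ac^0_{d^\star+1}$ upper bound is polynomial. If $S_{d^\star+1}\in\tf_l[1,L,P]$, the backward simulation would place $S_{d^\star+1}$ in polynomial-size $\ac^0_{d^\star}$, contradicting the lower bound. Conversely, the polynomial-size $\ac^0_{d^\star+1}$ upper bound together with the forward simulation places $S_{d^\star+1}\in\tf_{2(d^\star+1)}[1,L,P]$. Taking $m := 2c_0 l + 4$ (any sufficiently large constant works) yields $\tf_l[1,L,P]\subsetneq\tf_m[1,L,P]$, with containment being immediate since depth-$l$ Transformers are a special case of depth-$m$ Transformers.

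The main obstacle I anticipate is ensuring that the invoked depth hierarchy is compatible with our uniformity requirement. The Sipser functions are defined by explicit tree structure and are straightforwardly logspace-uniform on the upper-bound side; on the lower-bound side, H{\aa}stad's switching lemma argument is non-uniform but the size lower bound applies to any $\ac^0_{d^\star}$ circuit, uniform or not, so combining it with our uniform simulation is sound. A minor technical point is bookkeeping the constant $c_0$ (the depth blow-up of the backward simulation): it is a fixed constant arising from the constant-depth $\ac^0$ implementations of fixed-precision arithmetic and logspace-uniform weight lookup, and does not grow with $l$, which is what makes the choice $m = 2c_0 l + 4$ a valid constant depending only on $l$.
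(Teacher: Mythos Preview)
Your proposal is correct and follows essentially the same approach as the paper: sandwich $\tf_l[1,L,P]$ between two fixed $\ac^0$ depth classes via Theorems~\ref{thm:ac0_tf} and~\ref{thm:tf_ac0}, then invoke the H{\aa}stad/Sipser depth hierarchy for $\ac^0$ to obtain a separating function. Your write-up is more explicit than the paper's (naming the Sipser witnesses, tracking the constant $c_0$, and addressing uniformity), but the structure of the argument is identical.
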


\begin{proof}
    By \cref{thm:ac0_tf} and \cref{thm:tf_ac0}, we have that $\ac^0_{l/2} \subseteq \tf_l[1, L, P] \subseteq \ac^0_{l'}$. By the result of \citet{hastad1986optimal}, there exist functions in $\ac^0_{d+1}$ that are not in $\ac^0_d$ (with polynomial size). Therefore, $\tf_m[1, L, P]$, where $m \geq 2l'$, is strictly more expressive than $\tf_l[1, L, P]$.
\end{proof}

\section{2-layer threshold circuit for parity}
\label{sec:parity}

Let $x_1, \ldots, x_n \in \{0,1\}$ be the inputs. Define $s = \sum_{i=1}^n x_i$.

\paragraph{First Layer:}
For $k = 1,2,\ldots,n$, construct a threshold gate
\[
G_k = \mathbb{I}\left[\sum_{i=1}^n x_i \geq k\right].
\]
In other words,
\[
G_k = 
\begin{cases}
1 & \text{if } s\ge k, \\
0 & \text{otherwise}.
\end{cases}
\]
Note that exactly $G_1, G_2, \ldots, G_s$ will output $1$, and the gates $G_{s+1}, \ldots, G_n$ will output $0$.

\paragraph{Second Layer:}
Introduce one additional threshold gate $F$ that takes inputs from $G_1,\ldots,G_n$:
\[
F = \mathbb{I}\left[ \sum_{i=1}^n (-1)^{i+1} G_i \geq 1\right].
\]

\paragraph{Correctness:}
\[
\sum_{i=1}^n (-1)^{i+1}\,G_i 
= \sum_{i=1}^s (-1)^{i+1}
\]
This is a simple alternating sum of $1$ and $-1$ for $s$ terms, which is $1$ if $s$ is odd, and $0$ if it is even. Therefore,
\[
F = 1 \quad \Longleftrightarrow \quad
\sum_{i=1}^n (-1)^{i+1}\,G_i \;\ge\; 1 
\quad \Longleftrightarrow \quad s \text{ is odd}.
\]
Thus $F$ outputs $1$ exactly when the number of $1$'s among $x_1,\ldots,x_n$ is odd, 
i.e.\ $F$ computes the parity function.

\section{Experiment details} \label{app:exp}

All experiments were conducted using a single NVIDIA V100 GPU. The model is a custom variant of Huggingface's GPT-2 implementation, to allow for different input dimensions and position encodings. Our Transformer model uses 2 layers, 4 attention heads, and a hidden dimension of 32. Positional encodings are learned during training. All models are trained for 50 epochs using the Adam optimiser with a learning rate of $5 \times 10^{-4}$, $\beta_1 = 0.9$, $\beta_2 = 0.999$, and no weight decay. We disable mixed precision and gradient clipping, as the models are small and training is stable.

Validation accuracy is monitored each epoch, and the model achieving the highest validation accuracy is used for test set evaluation. Training, validation, and test datasets consist of 500,000, 5,000, and 50,000 examples, respectively. The datasets consist of uniformly sampled bitstrings, where the label is the parity. We generate a dataset per random seed for each length in $\{20, 50, 100, 150, 200, 250, 300\}$.

All experiments are logged using Weights \& Biases, including batch and epoch metrics. Models are saved every 5 epochs and upon reaching a new maximum validation accuracy. The final test accuracy reported in the plots is averaged over three random seeds.

\end{document}